\documentclass[journal]{IEEEtran}

\usepackage{cite}
\usepackage{amsmath,amssymb,amsfonts}
\usepackage{amsthm}
\newtheorem{proposition}{Proposition}
\usepackage{dsfont}
\usepackage{tikz}
\usetikzlibrary{arrows.meta,positioning,fit,calc,patterns}
\usepackage{graphicx}
\usepackage{algorithm}
\usepackage{algpseudocode}
\usepackage{booktabs}
\usepackage{multirow}
\usepackage{threeparttable}
\usepackage{xcolor}
\usepackage{url}
\usepackage{balance}

\newcommand{\MotFifoFree}{113.0}    
\newcommand{\MotFifoLag}{206.0}     
\newcommand{\MotFifoInfl}{82}       
\newcommand{\MotOptFree}{100.0}     
\newcommand{\MotOptLag}{198.0}      
\newcommand{\MotOptInfl}{98}        
\newcommand{\MotRepaired}{261}      
\newcommand{\MotRepairPenalty}{27}  
\newcommand{\LagRatioLo}{0.88}      
\newcommand{\LagRatioHi}{1.2}

\newcommand{\nTest}{100}
\newcommand{\nVali}{100}                
\newcommand{\CpsatCap}{300}             
\newcommand{\CpsatCoverage}{100}        
\newcommand{\CpsatOptCount}{86}         
\newcommand{\CpsatNonOpt}{14}           
\newcommand{\CpsatMeanLag}{203.2}       
\newcommand{\TestLagInflation}{67.2}    
\newcommand{\FifoMean}{231.2}\newcommand{\FifoStd}{9.4}\newcommand{\FifoGap}{13.9}
\newcommand{\MorMean}{233.8}\newcommand{\MorStd}{11.8}\newcommand{\MorGap}{15.2}
\newcommand{\SptMean}{216.0}\newcommand{\SptStd}{8.6}
\newcommand{\MwkrMean}{224.7}\newcommand{\MwkrStd}{9.6}\newcommand{\MwkrGap}{10.7}
\newcommand{\SptGap}{6.4}               
\newcommand{\PdrTime}{1.4}              
\newcommand{\FullGreedyMean}{210.9}\newcommand{\FullGreedyStd}{9.4}
\newcommand{\FullGreedyGap}{3.8}        
\newcommand{\FullGreedyTime}{1.97}      
\newcommand{\WilcoxonP}{1.8\times 10^{-10}}  
\newcommand{\WinTieLoss}{73/11/16}      
\newcommand{\FullValiBest}{212.0}       
\newcommand{\NumSeeds}{three}
\newcommand{\FullSeedMean}{211.5}\newcommand{\FullSeedStd}{0.6}
\newcommand{\FullSeedRange}{1.2}        
\newcommand{\AzeroBlindMean}{129.0}     
\newcommand{\AzeroRepairMean}{306.4}    
\newcommand{\AzeroRepairStd}{21.9}
\newcommand{\AzeroRepairGap}{51.0}      
\newcommand{\AzeroRepairTime}{1.87}
\newcommand{\AzeroPlanInfl}{137}        
\newcommand{\AoneGreedyMean}{211.1}\newcommand{\AoneGreedyStd}{9.5}\newcommand{\AoneGreedyGap}{3.9}
\newcommand{\AoneGreedyTime}{1.86}
\newcommand{\AtwoGreedyMean}{210.7}\newcommand{\AtwoGreedyStd}{9.6}\newcommand{\AtwoGreedyGap}{3.7}
\newcommand{\AtwoGreedyTime}{1.85}
\newcommand{\AblFullVsAoneP}{0.51}      
\newcommand{\AblFullVsAtwoP}{0.89}      
\newcommand{\AblFullVsAzeroP}{3.9\times 10^{-18}}  
\newcommand{\FullSamplingMean}{207.4}\newcommand{\FullSamplingStd}{8.7}
\newcommand{\FullSamplingGap}{2.1}\newcommand{\FullSamplingTime}{40.3}

\newcommand{\MtightOpt}{14}
\newcommand{\MtightInfl}{36.2}
\newcommand{\MtightFullMean}{254.1}\newcommand{\MtightFullStd}{13.0}\newcommand{\MtightFullGap}{6.6}

\newcommand{\MtightSptMean}{267.3}\newcommand{\MtightSptStd}{12.7}\newcommand{\MtightSptGap}{12.2}
\newcommand{\MtightMwkrMean}{299.4}\newcommand{\MtightMwkrStd}{18.3}
\newcommand{\MtightFifoMean}{306.3}\newcommand{\MtightFifoStd}{14.1}
\newcommand{\MtightMorMean}{319.8}\newcommand{\MtightMorStd}{22.9}
\newcommand{\MtightMwkrGap}{25.7}\newcommand{\MtightFifoGap}{28.6}\newcommand{\MtightMorGap}{34.2}
\newcommand{\MtightWinTieLoss}{93/2/5}\newcommand{\MtightWilcoxonP}{8.1\times 10^{-17}}
\newcommand{\MtightAzeroRepairMean}{421.4}\newcommand{\MtightAzeroRepairGap}{76.6}
\newcommand{\MtightAoneMean}{257.9}\newcommand{\MtightAoneStd}{13.1}\newcommand{\MtightAoneGap}{8.2}
\newcommand{\MtightAtwoMean}{255.5}\newcommand{\MtightAtwoStd}{13.0}\newcommand{\MtightAtwoGap}{7.2}
\newcommand{\MtightAzeroRepairStd}{43.5}
\newcommand{\MtightSeeds}{three}
\newcommand{\MtightAoneSeedStd}{1.7}

\newcommand{\MtightFullVsAonePooledP}{2\times 10^{-11}}  
\newcommand{\MtightFullVsAtwoPooledP}{1\times 10^{-3}}   

\newcommand{\GenSwtl}{13/2/35}

\newcommand{\GenLwtl}{10/2/38}
\newcommand{\GenRcfullMean}{231.5}\newcommand{\GenRcsptMean}{235.8}
\newcommand{\GenRcwtl}{38/1/11}\newcommand{\GenRcP}{3.1\times 10^{-4}}
\newcommand{\GenSteelfullMean}{221.7}\newcommand{\GenSteelsptMean}{235.9}
\newcommand{\GenSteelwtl}{43/2/5}\newcommand{\GenSteelP}{1.1\times 10^{-7}}

\newcommand{\RDisOffset}{40}             
\newcommand{\RLagFactor}{2}              
\newcommand{\RLPdrlMs}{220.7}\newcommand{\RLPdrlInf}{4.7}\newcommand{\RLPdrlT}{1.2}\newcommand{\RLPdrlNerv}{37.5}
\newcommand{\RLPrsMs}{233.0}\newcommand{\RLPrsInf}{10.5}\newcommand{\RLPrsT}{0.001}\newcommand{\RLPrsNerv}{37.2}

\newcommand{\RDeltaLP}{5.3}               
\newcommand{\RBDdrlMs}{211.0}\newcommand{\RBDdrlInf}{0.04}\newcommand{\RBDdrlT}{1.3}\newcommand{\RBDdrlNerv}{15.0}
\newcommand{\RBDrsMs}{220.1}\newcommand{\RBDrsInf}{4.4}\newcommand{\RBDrsT}{0.001}\newcommand{\RBDrsNerv}{59.9}

\newcommand{\RDeltaBD}{4.1}               

\newcommand{\ResidLPMs}{218.2}\newcommand{\ResidLPInfl}{3.5}\newcommand{\ResidLPTime}{0.56}
\newcommand{\ResidLPNerv}{170.1}
\newcommand{\ResidBDMs}{207.9}\newcommand{\ResidBDInfl}{-1.4}\newcommand{\ResidBDTime}{0.73}
\newcommand{\ResidBDNerv}{160.8}
\newcommand{\StabW}{0.5}                   
\newcommand{\StabLPMs}{220.4}\newcommand{\StabLPInfl}{4.6}\newcommand{\StabLPTime}{0.3}
\newcommand{\StabLPNerv}{9.1}
\newcommand{\StabBDMs}{210.7}\newcommand{\StabBDInfl}{-0.1}\newcommand{\StabBDTime}{0.2}
\newcommand{\StabBDNerv}{0.3}

\newcommand{\BufNumInstances}{100}
\newcommand{\BufOursPeakMax}{10}\newcommand{\BufOursPeakMean}{8.9}

\newcommand{\GaBudget}{60}
\newcommand{\GaMMean}{214.2}\newcommand{\GaMStd}{8.4}\newcommand{\GaMGap}{5.5}
\newcommand{\GaMtightMean}{264.2}\newcommand{\GaMtightStd}{12.9}\newcommand{\GaMtightGap}{10.9}
\newcommand{\DrlVsGaMP}{2.7\times 10^{-7}}      
\newcommand{\DrlVsGaMtightP}{6\times 10^{-16}}  
\newcommand{\GaLongBudget}{300}
\newcommand{\GaLongSubN}{20}
\newcommand{\GaLongSubMean}{211.4}
\newcommand{\FullVsGaLongWins}{13}    

\newcommand{\CpsatBudA}{1}\newcommand{\CpsatBudB}{5}\newcommand{\CpsatBudC}{30}
\newcommand{\CpsatBudAMean}{208.4}\newcommand{\CpsatBudAOpt}{36}
\newcommand{\CpsatBudBMean}{203.6}\newcommand{\CpsatBudBOpt}{67}
\newcommand{\CpsatBudCOpt}{74}

\newcommand{\SensMaxDeltaPct}{30}        
\newcommand{\SensWinLo}{73}\newcommand{\SensWinHi}{78}  
\newcommand{\SensWorstP}{6\times 10^{-10}}  

\newcommand{\TrainUpdates}{1000}
\newcommand{\TrainWallHours}{5.0}       
\newcommand{\TrainEnvs}{20}

\newcommand{\nOpsPerModule}{22}
\newcommand{\nStationTypes}{9}
\newcommand{\nOpTypes}{5}
\newcommand{\nRoutingClasses}{4}
\newcommand{\FactoryDefault}{25}        
\newcommand{\FactoryTight}{15}          
\newcommand{\FactorySmall}{9}           
\newcommand{\CureLo}{24}\newcommand{\CureHi}{48}    
\newcommand{\PondLo}{24}\newcommand{\PondHi}{48}    
\newcommand{\DryLo}{12}\newcommand{\DryHi}{24}      

\newcommand{\RegressMean}{408.40}       

\newcommand{\PoolCheckInst}{20}         
\newcommand{\PoolCheckStates}{4400}     

\newcommand{\MDrlMix}{211.4}   
\newcommand{\LDrlMix}{259.7}\newcommand{\LDrlMixSamp}{253.9}   
\newcommand{\MidDrlMix}{326.2}                                 
\newcommand{\XLDrlMixGreedy}{409.6}\newcommand{\XLDrlMixSampling}{390.1}  
\newcommand{\MixTimeM}{1.82}\newcommand{\MixTimeL}{3.61}\newcommand{\MixTimeXL}{7.52}
\newcommand{\SptM}{216.0}\newcommand{\SptL}{265.1}\newcommand{\SptMid}{324.9}\newcommand{\SptXL}{388.2}
\newcommand{\WarmLB}{260.0}  
\newcommand{\WarmXLA}{386.1}      

\begin{document}

\title{Time-Lag-Aware Deep Reinforcement Learning for Flexible
Job-Shop Scheduling in PPVC Module Factories}

\author{Ziheng~Zhang,~\IEEEmembership{Student~Member,~IEEE,}
and~Wei~Zhang,~\IEEEmembership{Senior~Member,~IEEE}%
\thanks{Z.~Zhang (Research Fellow) and W.~Zhang (Associate Professor) are
with the Singapore Institute of Technology, Singapore (e-mail:
ziheng.zhang@singaporetech.edu.sg; wei.zhang@singaporetech.edu.sg).
\emph{(Corresponding author: Wei~Zhang.)}}%
\thanks{Manuscript submitted July~2026. This work has been submitted to
the IEEE for possible publication. Copyright may be transferred without
notice, after which this version may no longer be accessible.}}
\markboth{IEEE Transactions on Industrial Informatics}%
{Zhang and Zhang: Time-Lag-Aware DRL for FJSP in PPVC Module Factories}

\maketitle

\begin{abstract}
Prefabricated prefinished volumetric construction moves most building
work into module factories, whose production floor operates as a
flexible job shop. One complication is decisive: long post-operation
time-lags caused by concrete curing, watertightness ponding tests, and
paint drying, during which a module is blocked while its workstation
stays free. On benchmark instances grounded in an official national
prefabrication guidebook, these lags inflate even the optimal reference
makespan by about 67\% on average, and ignoring them at decision time,
then repairing to feasibility, is worse than every dispatching rule. We adapt a
state-of-the-art dual-attention deep reinforcement learning solver
through three minimally invasive, individually ablatable extensions:
lag-aware dynamics with an admissible reward bound, two anticipatory lag
feature channels, and liveness-masked operation- and station-type
embeddings. With every extension disabled the implementation reproduces
the original solver exactly, so all gains are attributable to the
adaptations. We release a public, guidebook-grounded benchmark
generator. On held-out instances the learned policy is the strongest
solver-free scheduler: it reaches within about 4\% of a
constraint-programming reference and beats every dispatching rule and a
genetic-algorithm metaheuristic, with its advantage widening under
capacity contention, and a single size-mixed policy carries this lead
across the trained range of factory sizes. It needs no solver, model, or
license in the loop and re-plans within seconds of a disruption; where
an exact solver can be deployed, that solver remains the quality
ceiling, a boundary we map explicitly.
\end{abstract}

\begin{IEEEkeywords}
Deep reinforcement learning, flexible job-shop scheduling,
prefabricated prefinished volumetric construction (PPVC), reactive
rescheduling, time-lags.
\end{IEEEkeywords}

\section{Introduction}\label{sec:intro}
\IEEEPARstart{P}{refabricated} Prefinished Volumetric Construction
(PPVC) relocates structural work, mechanical, electrical, and plumbing
(MEP) installation, and interior finishing from the construction site
into a factory that mass-produces fully fitted three-dimensional
modules~\cite{hammad2020novel,peiris2023production}. Singapore's
Building and Construction Authority (BCA), for instance, codifies PPVC
production requirements in an official guidebook~\cite{bca2017ppvc} and
requires PPVC on selected government land-sale sites, making
module-factory throughput a binding constraint on national housing
programs. The
factory floor is a \emph{flexible job shop}: each module
visits a sequence of specialized stations (mold preparation, casting
or welding, MEP fit-out, tiling, painting, quality gates), and each
visit may be served by any station of a compatible type.

What separates PPVC factories from the classical flexible job-shop
scheduling problem (FJSP) is the prevalence of \emph{post-operation
time-lags}. After concrete is poured, the module must cure for
\CureLo--\CureHi{}\,h; after waterproofing, wet modules undergo a
ponding test of \PondLo--\PondHi{}\,h; each paint coat dries for
\DryLo--\DryHi{}\,h. During a lag the \emph{module} is blocked, but
the \emph{station} that produced it is immediately free to serve other
modules. In BCA-grounded instances the total lag volume amounts to
\LagRatioLo--\LagRatioHi$\times$ the total processing volume: lags
are not a perturbation; they are the dominant temporal structure.

Two observations motivate this article. First, lags reshape the problem
even under perfect optimization: on one 5-module instance, the
\emph{optimal} makespan computed by the CP-SAT constraint-programming
solver~\cite{perron2024cpsat} rises from \MotOptFree{}\,h to
\MotOptLag{}\,h (+\MotOptInfl\%) once lags are modeled
(Table~\ref{tab:inflation}; the 100-instance test mean is
\TestLagInflation\%, Section~\ref{sec:experiments}). Second, the common industrial practice (planning \emph{without} lags,
then repairing the plan afterward) is costly. Right-shifting a
lag-blind schedule to feasibility yields \MotRepaired{}\,h:
\MotRepairPenalty\% worse than the makespan the \emph{same} dispatching
rule attains when it instead observes the lags while it builds the
schedule. The penalty is the price of deferring lag-awareness, not of
the rule itself. Scheduling decisions must therefore be lag-aware
\emph{at decision time}.

\begin{table}[!t]
\caption{Makespan Inflation When Time-Lags Are Modeled
(5-module mixed instance, \textsc{default} factory;
FIFO: first-in-first-out).}
\label{tab:inflation}
\centering
\begin{tabular}{lccc}
\toprule
Scheduler & Lag-Free (h) & Lag-Aware (h) & Inflation \\
\midrule
FIFO dispatching & \MotFifoFree & \MotFifoLag & $+\MotFifoInfl\%$ \\
CP-SAT \emph{optimal} & \MotOptFree & \MotOptLag & $+\MotOptInfl\%$ \\
\bottomrule
\end{tabular}
\end{table}

Lag-awareness must moreover be \emph{fast}: breakdowns, rework, rush
modules, and realized lag durations that deviate from plan all recur and
invalidate the standing schedule, so re-planning inside a
manufacturing-execution loop must complete in seconds to be
actionable~\cite{lei2024largescale} (Section~\ref{sec:reactive}). An
exact solver returns strong feasible plans in seconds on capacity-rich
instances and, given a stability objective or a heuristic warm start,
re-plans with little churn; where one can be deployed it is hard to
beat. The obstacle is operational, not
algorithmic, and not mere licensing: hosting a solver in the control
loop carries its runtime, its optimization model, and a re-tuned penalty
on every disruption, a footprint that burdens the low-compute edge
controllers and lightweight execution systems common in PPVC plants,
even though the underlying OR-Tools solver is itself free. A learned policy is instead a fixed
sub-megabyte network run in a few matrix multiplications, with no solver
in the loop. Current practice right-shifts the lag-blind plan to
feasibility:
fast and solver-free, but far from optimal. The loop therefore needs a
fast, \emph{solver-free} re-planner that improves on right-shift, which
a learned constructive policy provides.

Deep reinforcement learning (DRL) has recently produced fast,
high-quality constructive solvers for
FJSP~\cite{zhang2020l2d,song2023fjsp,wang2024daniel,lei2022multiaction,lei2024largescale},
learning dispatching policies over graph or attention-based state
encodings that generalize across instance sizes. However, these
methods uniformly assume an operation releases its successor the
moment it completes, precisely the assumption PPVC violates. Three
literatures bracket our problem but none covers it. DRL-for-FJSP learns
fast policies yet ignores lags (above). The time-lag scheduling
literature models lags exactly but offers only complexity results
and exact or metaheuristic methods for small
instances~\cite{bartusch1988scheduling,wikum1994onemachine,brucker1999branch,caumond2008memetic},
with no learning-based constructive solver. PPVC-specific production
scheduling, finally, captures the application but simplifies the shop
to a (flexible) flowshop solved by
metaheuristics~\cite{chan2002production,wang2023production}, discarding
the routing flexibility and lag dynamics that define the real cell.

We bridge this gap by adapting DANIEL~\cite{wang2024daniel}, a
state-of-the-art dual-attention DRL solver for FJSP, to the PPVC
setting. The obstacle is not representational but structural: under
post-operation lags a lag-blind solver's \emph{decision process} is
ill-posed, because its transition (when a job becomes ready) and its
dense, telescoped makespan-bound reward are both wrong, so the policy
optimizes a miscalibrated signal however expressive its network is.
Correcting these (Adaptation~1) is therefore a prerequisite for
\emph{any} constructive DRL solver in this domain, not a tuning choice.
Around that necessary correction our design principle is \emph{minimal
invasiveness}: the network backbone, the proximal-policy-optimization
(PPO) training scheme, and the reward \emph{principle} are left
untouched, and every extension is gated by an independent flag. With all flags off, our implementation reproduces the
published DANIEL evaluation \emph{bit-exactly} (mean makespan
\RegressMean{} on the public SD2 benchmark~\cite{song2023fjsp},
100/100 instances identical). Every reported gain is therefore
attributable to a named adaptation rather than to incidental
reimplementation differences, and the adaptations port unchanged to
other constructive FJSP solvers.

To the best of our knowledge, this is the first learned constructive
solver explicitly designed for flexible job shops with
\emph{job-blocking, machine-free} post-operation time-lags; concretely,
the contributions are:
\begin{enumerate}
\item \textbf{Problem formalization (C1).} We formalize PPVC
module-factory scheduling as FJSP with finish-start time-lags
(FJSP-TL) under \emph{job-blocking, machine-free} lag semantics
(Section~\ref{sec:problem}), and quantify empirically that lags
dominate both optimal and heuristic makespans
(Sections~\ref{sec:problem} and~\ref{sec:experiments}).
\item \textbf{Lag-aware DRL adaptations (C2).} Three minimally
invasive, individually ablatable extensions to DANIEL: lag-aware
transition dynamics with an \emph{admissible} lag-augmented reward
bound; anticipatory lag feature channels; and type embeddings injected
after per-step feature normalization with a liveness mask that
provably preserves the backbone's deleted-node pooling semantics
(Section~\ref{sec:method}). The lag-aware dynamics are the essential
correction (a lag-blind decision process is ill-posed) and carry the
makespan gain on capacity-rich shops; the anticipatory features and type
embeddings are lightweight enhancements whose value we \emph{test rather
than assume}, isolating it under capacity contention and cross-shell
transfer (Section~\ref{sec:experiments}).
\item \textbf{BCA-grounded benchmark (C3).} A public instance
generator whose routing classes, station taxonomy, and lag ranges are
extracted from the official BCA PPVC guidebook, with explicit
provenance tags separating guidebook facts from estimated durations
(Section~\ref{sec:benchmark}).
\item \textbf{Systematic evaluation and a stated scope (C4).} A four-arm
ablation; priority-rule, genetic-algorithm, and cold- and warm-started
CP-SAT baselines; statistical testing; and zero-shot transfer across
capacities, shell systems, and module counts. The study positions the
policy precisely: the strongest \emph{solver-free} scheduler at the
trained scales and under contention, matching a warm-started exact
solver in the real-time window, with no quality win claimed over an
exact solver where its infrastructure is available, a boundary we map
rather than blur (Section~\ref{sec:experiments}).
\end{enumerate}

\section{Related Work}
\subsection{DRL for Flexible Job-Shop Scheduling}
Learning-to-dispatch methods encode the scheduling state as a
(disjunctive) graph and train a policy to select operation-machine
pairs step by step. Zhang \emph{et al.}~\cite{zhang2020l2d} pioneered
this for the job shop, and Song \emph{et al.}~\cite{song2023fjsp}
extended it to FJSP with heterogeneous graph neural networks (GNNs).
Moving beyond message passing, DANIEL~\cite{wang2024daniel} encodes the
same state with dual attention over operation and machine nodes,
attaining state-of-the-art quality at millisecond inference (the
property a reactive factory loop requires, and the reason we build on
it). Subsequent end-to-end work scales these solvers (multi-action and
large-scale variants~\cite{lei2022multiaction,lei2024largescale}) and
explores transformer-based state encodings~\cite{xu2024transformer};
see~\cite{smit2025graph} for a recent survey. Attention-based and
end-to-end formulations also advance dynamic and distributed job-shop
scheduling~\cite{liu2024dynamicgat,huang2024endtoend}. Closest to our
setting, real-time DRL has been applied to partial-no-wait flexible job
shops~\cite{luo2022realtime}; no-wait is the special case in which the
inter-operation gap is fixed to zero, whereas PPVC lags are strictly
positive, operation-dependent waiting windows during which the station
is released. What none of these formulations represents, however, is a
post-operation time-lag: every method assumes the successor of a
finished operation is immediately ready. We retain the DANIEL backbone
unchanged and isolate exactly the minimal modifications required once
that assumption no longer holds.

\subsection{Scheduling With Time-Lags}
Minimal and maximal time-lags between operations were introduced in
project scheduling~\cite{bartusch1988scheduling}; even single-machine
variants are NP-hard~\cite{wikum1994onemachine}, and exact or
metaheuristic schemes address single-machine, job-shop, and flowshop
variants with positive, negative, and maximal
lags~\cite{brucker1999branch,caumond2008memetic,fondrevelle2006flowshop,dhouib2018nonpermutation}
(maximal lags capture bounded post-operation windows, e.g., painting
soon after curing, that also arise in PPVC). Across these the machine
routing is fixed, the time-lag/routing-flexibility interaction of an
FJSP is unaddressed, and the methods learn no reusable policy, requiring
a solver in the loop when applied reactively
(Section~\ref{sec:reactive}). To our knowledge no learned constructive
solver for lag-constrained \emph{flexible} job shops has been reported.

\subsection{Precast and PPVC Production Scheduling}
Precast and PPVC production has traditionally been modeled as a flowshop
with curing constraints and solved by heuristics, metaheuristics, or
constraint
programming~\cite{chan2002production,wang2023production,peiris2023production,hammad2020novel}.
Reinforcement learning has only lately entered off-site
construction~\cite{kim2022rlprecast,zu2025rlprecast,du2024distributed,wang2024mictransport,elmenshawy2025pipespool},
but these studies fix the routing (flowshop), treat curing as a fixed
delay, or optimize a single project offline. We instead model the
factory as a flexible job shop with job-blocking lags, matching how
modern PPVC plants share stations across concurrently produced module
types, with a learned solver fast enough (seconds per instance) for
reactive rescheduling.

\section{Problem Formulation}\label{sec:problem}
\subsection{FJSP With Post-Operation Time-Lags (FJSP-TL)}
%
%
\begin{table}[!t]
\caption{Main Notation.}
\label{tab:notation}
\centering
\begin{tabular}{ll}
\toprule
Symbol & Description \\
\midrule
$J_i$            & module (job) $i$ \\
$M_k$            & station (machine) $k$ \\
$O_{ij}$         & $j$-th operation of module $i$ \\
$n$, $m$         & number of modules, stations \\
$n_i$            & number of operations of module $i$ \\
$\mathcal{M}_{ij}$ & indices of stations eligible for $O_{ij}$ \\
$p_{ijk}$        & processing time of $O_{ij}$ on $M_k$ \\
$\ell_{ij}$      & post-operation time-lag of $O_{ij}$ \\
$s_{ij}$         & start time of $O_{ij}$ \\
$c_{ij}$         & completion time, $c_{ij}=s_{ij}+p_{ij,k_{ij}}$ \\
$k_{ij}$         & station assigned to $O_{ij}$ \\
$C_{\max}$       & factory makespan, $\max_i c_{i,n_i}$ \\
$\hat{c}_{i,j}$  & lag-augmented completion lower bound \\
$\rho_i$         & job ready time, $c_{ij}+\ell_{ij}$ \\
$\eta^{\text{lag}}_{ij}$ & static lag channel, $=\ell_{ij}$ \\
$\eta^{\text{rem}}_{ij}$ & remaining in-flight lag at $t$ \\
$\tau_{ij}$, $\tau_k$ & operation / station type index \\
$E_{\text{op}}$, $E_{\text{mch}}$ & op / station type embeddings \\
$r_t$            & PPO reward, decrease of $\hat{C}_{\max}$ \\
\bottomrule
\end{tabular}
\end{table}

An instance comprises $n$ modules (jobs) $\mathcal{J}=\{J_1,\dots,J_n\}$
and $m$ stations (machines) $\mathcal{M}=\{M_1,\dots,M_m\}$. Module
$J_i$ is fabricated by an ordered route
$O_{i,1}\prec O_{i,2}\prec\dots\prec O_{i,n_i}$. Operation $O_{ij}$
may be processed by any station in a compatible index set
$\mathcal{M}_{ij}\subseteq\{1,\dots,m\}$, with processing time
$p_{ijk}$ on station $M_k$, $k\in\mathcal{M}_{ij}$, and carries a
\emph{post-operation time-lag} $\ell_{ij}\ge 0$. Each operation
further carries a type index $\tau_{ij}$ (\nOpTypes{} classes) and
each station a type $\tau_k$ (\nStationTypes{} classes); the concrete
BCA-derived taxonomy is given in Section~\ref{sec:benchmark}.
Table~\ref{tab:notation} summarizes the main notation.

A schedule assigns each operation a station
$k_{ij}\in\mathcal{M}_{ij}$ and a start time $s_{ij}\ge 0$, with
completion $c_{ij}=s_{ij}+p_{ij,k_{ij}}$, subject to:
\begin{align}
s_{i,j+1} &\;\ge\; c_{ij} + \ell_{ij}, \qquad \forall i,\ 1\le j<n_i,
  \label{eq:lagprec}\\
[s_{ij},c_{ij}) &\cap [s_{i'j'},c_{i'j'}) = \emptyset \nonumber\\
 &\qquad \text{whenever } k_{ij}=k_{i'j'},\ (i,j)\ne(i',j').
  \label{eq:capacity}
\end{align}
Constraint~\eqref{eq:lagprec} is a finish-start precedence with a
\emph{minimal time-lag}: the next operation on the module cannot start
until the lag (curing, ponding, drying) has elapsed.
Constraint~\eqref{eq:capacity} encodes the \emph{job-blocking,
machine-free} semantics central to PPVC: the station is occupied only
during $[s_{ij},c_{ij})$; the lag detains the module in a buffer
area, not the station. This distinguishes FJSP-TL from (i) inflated
processing times $p+\ell$, which would block the station and
overestimate congestion (Fig.~\ref{fig:gantt}), and (ii)
sequence-dependent setups, which attach to the machine rather than the
job. The objective is the factory makespan
$C_{\max}=\max_{i} c_{i,n_i}$.\footnote{The terminal lag
$\ell_{i,n_i}$ delays module \emph{shipment} but occupies no factory
resource; we therefore report station-side makespan. In our benchmark
every route ends with final quality-gate and wrap-and-ship operations
with $\ell_{i,n_i}=0$, so the two definitions coincide.}
FJSP-TL is strongly NP-hard: setting $\ell\equiv 0$ and
$|\mathcal{M}_{ij}|=1$ recovers the classical job shop, whose makespan
minimization is already strongly NP-hard~\cite{garey1976complexity}.

\begin{figure}[!t]
  \centering
  \resizebox{\columnwidth}{!}{\begin{tikzpicture}[
  font=\scriptsize,
  >={Stealth[length=1.8mm,width=1.4mm]},
  x=4.3mm, y=1mm,
  proc/.style={draw, line width=0.5pt, fill=black!15},          
  lagbar/.style={draw, line width=0.5pt,
                 pattern=north east lines, pattern color=black!65}, 
  axis/.style={->, line width=0.5pt},
  tick/.style={line width=0.4pt},
  cmax/.style={dashed, line width=0.7pt},
  rowlab/.style={anchor=east, font=\scriptsize, inner sep=1.5pt},
  blab/.style={font=\scriptsize, inner sep=0pt},
  panttl/.style={font=\scriptsize\bfseries, anchor=west, inner sep=0pt},
  note/.style={font=\scriptsize, inner sep=0pt},
]

\def\ph{6}      
\def\gp{4}      
\def\rowx{0}    
\def\axlen{10}  

\def\aMone{0}                       
\def\aMtwo{\aMone-\ph-\gp}          

\fill[proc] (0,\aMone) rectangle (3,\aMone+\ph);
\draw       (0,\aMone) rectangle (3,\aMone+\ph);
\node[blab] at (1.5,\aMone+\ph/2) {$O_{11}$};
\fill[lagbar] (3,\aMone) rectangle (6,\aMone+\ph);
\draw         (3,\aMone) rectangle (6,\aMone+\ph);
\node[blab] at (4.5,\aMone+\ph/2) {$\ell$};
\fill[proc] (6,\aMone) rectangle (9,\aMone+\ph);
\draw       (6,\aMone) rectangle (9,\aMone+\ph);
\node[blab] at (7.5,\aMone+\ph/2) {$O_{21}$};


\node[rowlab] at (\rowx,\aMone+\ph/2) {$M_1$};
\node[rowlab] at (\rowx,\aMtwo+\ph/2) {$M_2$};

\node[panttl, anchor=south west] at (\rowx-0.2,\aMone+\ph+6.5)
  {(a) Processing-time inflation $p_{11}\!+\!\ell$: station blocked};

\draw[cmax] (9,\aMtwo-2) -- (9,\aMone+\ph+1.0);
\node[note, anchor=south west] at (9.1,\aMone+\ph+0.2) {$C_{\max}^{(a)}$};

\draw[axis] (\rowx,\aMtwo-3) -- (\axlen,\aMtwo-3);
\foreach \t in {0,3,6,9}{
  \draw[tick] (\t,\aMtwo-3) -- (\t,\aMtwo-4.2);
  \node[note,anchor=north] at (\t,\aMtwo-4.4) {\t};
}
\node[note,anchor=west] at (\axlen,\aMtwo-3) {\,$t$};

\def\boff{-34}                      
\def\bMone{\aMone+\boff}            
\def\bMtwo{\bMone-\ph-\gp}          
\def\bMod{\bMtwo-\ph-\gp}          

\fill[proc] (0,\bMone) rectangle (3,\bMone+\ph);
\draw       (0,\bMone) rectangle (3,\bMone+\ph);
\node[blab] at (1.5,\bMone+\ph/2) {$O_{11}$};
\fill[proc] (3,\bMone) rectangle (6,\bMone+\ph);
\draw       (3,\bMone) rectangle (6,\bMone+\ph);
\node[blab] at (4.5,\bMone+\ph/2) {$O_{21}$};

\fill[proc] (6,\bMtwo) rectangle (8,\bMtwo+\ph);
\draw       (6,\bMtwo) rectangle (8,\bMtwo+\ph);
\node[blab] at (7,\bMtwo+\ph/2) {$O_{12}$};

\fill[lagbar] (3,\bMod) rectangle (6,\bMod+\ph);
\draw         (3,\bMod) rectangle (6,\bMod+\ph);
\node[blab] at (4.5,\bMod+\ph/2) {$\ell$};

\node[rowlab] at (\rowx,\bMone+\ph/2) {$M_1$};
\node[rowlab] at (\rowx,\bMtwo+\ph/2) {$M_2$};
\node[rowlab, align=right] at (\rowx,\bMod+\ph/2) {module 1\\(buffer)};

\node[panttl, anchor=south west] at (\rowx-0.2,\bMone+\ph+6.5)
  {(b) Job-blocking, machine-free (ours): station free during lag};

\node[note, anchor=south east] at (2.9,\bMone+\ph+1.4) {$M_1$ free at $c_{11}$\,};
\draw[->,line width=0.4pt] (3,\bMone+\ph+1.4) -- (3,\bMone+\ph);
\draw[cmax] (6,\bMod-2) -- (6,\bMone+\ph+1.0);
\node[note, anchor=south east] at (5.9,\bMone+\ph+0.2) {$C_{\max}^{(b)}$};

\draw[cmax, black!45] (9,\bMod-2) -- (9,\bMone+\ph+1.0);
\node[note, anchor=south west, black!45] at (9.1,\bMone+\ph+0.2) {$C_{\max}^{(a)}$};
\draw[<->,line width=0.4pt] (6,\bMod-1) -- (9,\bMod-1);
\node[note,anchor=north] at (7.5,\bMod-1) {saving};

\draw[axis] (\rowx,\bMod-3) -- (\axlen,\bMod-3);
\foreach \t in {0,3,6,9}{
  \draw[tick] (\t,\bMod-3) -- (\t,\bMod-4.2);
  \node[note,anchor=north] at (\t,\bMod-4.4) {\t};
}
\node[note,anchor=west] at (\axlen,\bMod-3) {\,$t$};

\def\leg{\bMod-11}
\fill[proc] (\rowx,\leg) rectangle (\rowx+1.6,\leg+\ph*0.7);
\draw       (\rowx,\leg) rectangle (\rowx+1.6,\leg+\ph*0.7);
\node[note,anchor=west] at (\rowx+2.0,\leg+\ph*0.35) {processing $p$};
\fill[lagbar] (\rowx+5.6,\leg) rectangle (\rowx+7.2,\leg+\ph*0.7);
\draw         (\rowx+5.6,\leg) rectangle (\rowx+7.2,\leg+\ph*0.7);
\node[note,anchor=west] at (\rowx+7.6,\leg+\ph*0.35) {lag $\ell$ (buffer)};

\end{tikzpicture}}
  \caption{Two ways of modeling a post-operation time-lag $\ell$, on
  the same toy instance (2 modules, 2 stations). Module~1's casting
  $O_{11}$ on station $M_1$ carries a curing lag; module~2 wants $M_1$
  next. (a) Inflating the processing time to $p_{11}\!+\!\ell$ blocks
  the station for the whole lag, so $M_1$ cannot serve $O_{21}$ until
  the lag elapses, overestimating congestion and the makespan.
  (b) The job-blocking, machine-free semantics of FJSP-TL,
  \eqref{eq:capacity}: $M_1$ is occupied only until the physical
  completion $c_{11}$ and is immediately free to serve $O_{21}$, while
  module~1 cures in a buffer (hatched, off-station) and its successor
  $O_{12}$ starts only at $c_{11}\!+\!\ell$. The station-side makespan
  is correspondingly shorter.}
  \label{fig:gantt}
\end{figure}

In the disjunctive-graph view, once $O_{ij}$ is assigned to station
$k_{ij}$, its outgoing conjunctive (job) arc carries weight
$p_{ij,k_{ij}}+\ell_{ij}$ while its outgoing disjunctive (machine)
arcs carry weight $p_{ij,k_{ij}}$, a single asymmetry with
consequences for every makespan estimator used during search or
learning.

\subsection{A Lag-Augmented Admissible Lower Bound}\label{sec:lb}
Constructive DRL solvers rely on an estimated completion-time lower
bound both for dense rewards and as a state feature. For a partial
schedule at decision time $t$, define recursively for each job $i$
\begin{equation}\label{eq:lb}
\hat{c}_{i,j} \;=\; \max\!\bigl(\hat{c}_{i,j-1} + \ell_{i,j-1},\, t\bigr)
 + \min_{k\in\mathcal{M}_{ij}} p_{ijk},
\end{equation}
for each unscheduled operation, with the convention
$\hat{c}_{i,0}=\ell_{i,0}=0$ and seeded with realized completions
$\hat{c}_{i,j}=c_{ij}$ for scheduled operations; the aggregate bound
is $\hat{C}_{\max}=\max_i \hat{c}_{i,n_i}$.

\begin{proposition}[Admissibility]\label{prop:lb}
For any partial schedule and any feasible completion of it,
$\hat{C}_{\max}\le C_{\max}$, and the bound is tight at episode
termination, where $\hat{C}_{\max}=C_{\max}$.
\end{proposition}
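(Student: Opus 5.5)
The plan is an induction along each job's route, comparing $\hat{c}_{i,j}$ with the completion time $c_{ij}$ that the operation receives in an arbitrary feasible completion of the partial schedule. The premise is the constructive semantics of the decision process. At decision time $t$, every operation still unscheduled is dispatched at or after $t$, since the environment never places an operation in the past. Every scheduled operation keeps its realized start, station and completion in any completion of the partial schedule. I would state this explicitly as the meaning of ``feasible completion''. It is exactly what licenses the $\max(\cdot,t)$ term in \eqref{eq:lb}.

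\textbf{Step 1 (per-operation bound).} Fix a job $i$ and a feasible completion. I claim $\hat{c}_{i,j}\le c_{ij}$ for all $j$, and argue by induction on $j$.
\begin{itemize}
\item For scheduled operations this holds with equality, by the seeding $\hat{c}_{i,j}=c_{ij}$.
\item For the first unscheduled operation $O_{ij}$, constraint \eqref{eq:lagprec} gives $s_{ij}\ge c_{i,j-1}+\ell_{i,j-1}$. When $j=1$ this reads $s_{i1}\ge 0$, by the convention $\hat{c}_{i,0}=\ell_{i,0}=0$. The constructive premise gives $s_{ij}\ge t$. With the inductive hypothesis $\hat{c}_{i,j-1}\le c_{i,j-1}$, this yields $s_{ij}\ge\max(\hat{c}_{i,j-1}+\ell_{i,j-1},t)$.
\item Finally, $c_{ij}=s_{ij}+p_{ij,k_{ij}}\ge s_{ij}+\min_{k\in\mathcal{M}_{ij}}p_{ijk}$, because $k_{ij}\in\mathcal{M}_{ij}$. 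Together with the previous bullet this gives $c_{ij}\ge\hat{c}_{i,j}$.
\end{itemize}
The same argument covers all later unscheduled operations, because the recursion and the constraints have identical form. The capacity constraint \eqref{eq:capacity} is never used. The bound only relaxes it, together with routing, which is why it is a valid lower bound.

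\textbf{Step 2 (aggregate).} Take $j=n_i$ and maximize over $i$. Since $\max$ is monotone, $\hat{C}_{\max}=\max_i\hat{c}_{i,n_i}\le\max_i c_{i,n_i}=C_{\max}$.

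\textbf{Step 3 (tightness).} At episode termination every operation is scheduled. Every $\hat{c}_{i,j}$ is then seeded with the realized $c_{ij}$, and the recursion \eqref{eq:lb} is never invoked. Hence $\hat{C}_{\max}=\max_i c_{i,n_i}=C_{\max}$ identically.

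The main obstacle is the first unscheduled operation in Step 1, specifically the justification of $s_{ij}\ge t$. Without it the $\max(\cdot,t)$ term could overshoot the true completion, for instance if a completion were allowed to back-fill an idle gap before $t$. I would resolve this by tying the statement to the environment's dispatch rule: operations are appended at or after the current decision time. If the implementation permits insertion into earlier gaps, the fallback is to drop $t$ from the recursion. Admissibility then follows verbatim, and tightness still holds at termination.
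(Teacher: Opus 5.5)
Your proposal is correct and is essentially the paper's proof. The paper treats \eqref{eq:lb} as the makespan of a relaxation of the residual problem: the capacity constraint is dropped, each processing time is replaced by $\min_{k}p_{ijk}$, and the lag precedence and elapsed time $t$ are retained; tightness then follows from seeding at termination. Your per-operation induction is that same relaxation argument unrolled step by step, and your explicit premise $s_{ij}\ge t$ for unscheduled operations is what the paper leaves implicit in ``retaining the elapsed time $t$.'' Building it into the definition of a feasible completion is the right reading, since with unrestricted insertion into earlier idle gaps the $\max(\cdot,t)$ term could overshoot.
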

\begin{proof}
Equation~\eqref{eq:lb} is the makespan of a relaxation of the residual
problem that (i) drops the station-capacity
constraint~\eqref{eq:capacity} and (ii) replaces each processing time
by its minimum $\min_{k}p_{ijk}$ over compatible stations, while
retaining the exact lag precedence~\eqref{eq:lagprec} and the elapsed
time $t$. Both relaxations only remove constraints or lower durations,
so the optimal makespan cannot increase: $\hat{c}_{i,j}$ lower-bounds
the completion of $O_{ij}$ in every feasible extension, and the
maximum over terminal operations bounds $C_{\max}$. At termination
every operation is scheduled and seeded with its realized completion
$c_{ij}$, so the relaxation coincides with the realized schedule and
the inequality becomes equality.
\end{proof}

The \emph{predecessor} lag $\ell_{i,j-1}$ enters before the
successor's processing time; with $\ell\equiv 0$, \eqref{eq:lb}
reduces exactly to the classical FJSP bound. Proposition~\ref{prop:lb}
also makes precise why the lag term cannot be dropped: omitting it
still yields a valid lower bound, but one that is systematically loose
under lag-aware dynamics; each scheduling commitment reveals
lag-induced delays the lag-free estimate never anticipated, so its
per-step rewards form a systematically looser shaping signal, even
though both bounds telescope to the same makespan objective
(Section~\ref{sec:dynamics}).

\section{Method: Lag-Aware Adaptations of DANIEL}\label{sec:method}
We adapt DANIEL to FJSP-TL through three extensions, organized around a
single principle: change the environment's \emph{semantics} where the
lag genuinely alters the problem, and leave the learning machinery
provably intact everywhere else. Two of the three changes therefore
come with correctness guarantees rather than tuning: the lag-aware
dynamics inherit an \emph{admissible} makespan bound
(Proposition~\ref{prop:lb}) under which the telescoped PPO return
remains \emph{objective-equivalent} to the true makespan, and the type
embeddings are gated so that they \emph{provably} preserve the
backbone's pooling set (Proposition~\ref{prop:live}). The third, anticipatory lag
observability, is the one component we test rather than assume.
Fig.~\ref{fig:arch} locates the three extensions on the backbone, and
each is individually ablatable.

\begin{figure}[!t]
  \centering
  \resizebox{\columnwidth}{!}{
%
\begin{tikzpicture}[
  font=\scriptsize,
  >={Stealth[length=2mm,width=1.5mm]},
  node distance=4.2mm,
  box/.style={draw, line width=0.6pt, rounded corners=1.5pt,
              align=center, minimum width=13.5mm, minimum height=9mm,
              fill=white, inner sep=1pt},
  adapt/.style={draw, line width=0.6pt, fill=black!12,
                rounded corners=1.5pt, align=center, inner sep=1pt,
                minimum width=14.5mm, minimum height=10mm, text width=13mm},
  flow/.style={->, line width=0.7pt},
  tap/.style={->, dashed, line width=0.5pt, shorten >=1pt},
]
\node[box] (env) {FJSP-TL\\environment};
\node[box, right=of env]  (feat) {state\\features};
\node[box, right=of feat] (norm) {per-step\\$z$-norm};
\node[box, right=of norm] (attn) {dual\\attention};
\node[box, right=of attn] (pool) {nonzero\\pooling};
\node[box, right=of pool] (pol)  {policy\\$\pi_\theta$};
\foreach \a/\b in {env/feat,feat/norm,norm/attn,attn/pool,pool/pol}
  {\draw[flow] (\a)--(\b);}

\node[draw=black!55, dash pattern=on 1.6pt off 1.4pt, line width=0.6pt,
      rounded corners=3pt, inner sep=3.4mm, fit=(env)(pol)] (bb) {};
\node[anchor=north, font=\scriptsize\itshape, black!60, fill=white,
      inner sep=0.6pt]
      at ($(bb.south)+(0,-1.1mm)$) {DANIEL backbone (unchanged)};

\node[adapt, above=7mm of env]  (a1) {\textbf{A1}\\lag-aware\\dynamics};
\node[adapt, above=7mm of feat] (a2) {\textbf{A2}\\$+2$ lag\\channels};
\node[adapt, above=7mm of norm] (a3) {\textbf{A3}\\type\\embeddings};
\draw[tap] (a1) -- (env.north);
\draw[tap] (a2) -- (feat.north);
\draw[tap] (a3) -- (norm.north);

\coordinate (rlo) at ($(env.south)+(0,-9.5mm)$);
\coordinate (rro) at ($(pol.south)+(0,-9.5mm)$);
\draw[flow, rounded corners=2.5pt] (pol.south) |- (rro) -- (rlo) |- (env.south);
\node[font=\scriptsize, anchor=north] at ($(rro)!0.5!(rlo)+(0,-0.4mm)$)
  {scheduled $(O_{ij},M_k)$;\; PPO reward
   $r_t=\hat{C}_{\max}(s_t)-\hat{C}_{\max}(s_{t+1})$\; (admissible, A1)};
\end{tikzpicture}}
  \caption{Lag-aware adaptation of the DANIEL dual-attention backbone.
  White boxes are the unmodified backbone (trained by PPO on the
  telescoping reward shown); the shaded callouts \textbf{A1}--\textbf{A3}
  are this article's flag-gated additions. With all flags off the path
  is bit-identical to the original solver.}
  \label{fig:arch}
\end{figure}

\subsection{Base Solver}
DANIEL~\cite{wang2024daniel} encodes the scheduling state with
operation features ($10$ channels) and machine features, $z$-normalized
per channel at every decision step, processes them through stacked
dual-attention blocks (operation- and machine-level self-attention
with cross conditioning~\cite{vaswani2017attention}), pools
node embeddings by \emph{nonzero averaging} (finished operations are
deleted by zeroing their feature rows), and outputs a probability over
eligible operation-machine pairs. Training is by
PPO~\cite{schulman2017ppo} with a dense reward
$r_t = \hat{C}_{\max}(s_t)-\hat{C}_{\max}(s_{t+1})$, the decrease of
the estimated makespan bound. Because these rewards telescope and the
bound is tight at termination (Proposition~\ref{prop:lb}), the
undiscounted return ($\gamma=1$) equals $\hat{C}_{\max}(s_0)-C_{\max}$,
exactly the makespan objective. We change neither the architecture, nor
the PPO scheme, nor the reward principle.
Algorithm~\ref{alg:rollout} summarizes one constructive episode,
marking this section's three adaptation points.
%
%
\begin{algorithm}[!t]
\caption{Lag-aware constructive scheduling with type-embedded dual
attention (one episode)}
\label{alg:rollout}
\begin{algorithmic}[1]
\Require routes $O_{i,1}\!\prec\!\dots\!\prec\!O_{i,n_i}$; eligible sets
  $\mathcal{M}_{ij}$; times $p_{ijk}$; lags $\ell_{ij}$;
  type indices $\tau_{ij}$ (op), $\tau_k$ (station); policy $\pi_\theta$
\Ensure station $k_{ij}$ and start $s_{ij}$ for all $O_{ij}$; makespan $C_{\max}$
\State $t\!\gets\!0$; all $O_{ij}$ unscheduled;
       $\rho_i\!\gets\!0$, free$_k\!\gets\!0$ $\;\forall i,k$
\State seed bound $\hat{c}_{i,j}$ from \eqref{eq:lb} \Comment{lag-augmented (A1)}
\While{some operation is unscheduled}
  \State build op features (10 ch.) and station features
  \State append $\eta^{\text{lag}}_{ij}\!=\!\ell_{ij}$,\;
         $\eta^{\text{rem}}_{ij}(t)$ \Comment{lag channels (A2)}
  \State $z$-normalize every channel; zero deleted (finished) rows
  \State $\mathbf{e}_{ij}\!\gets\!E_{\text{op}}(\tau_{ij})$,\;
         $\mathbf{e}_{k}\!\gets\!E_{\text{mch}}(\tau_k)$
  \State concat $\mathbf{e}$ after norm., masked by liveness
         $\mathds{1}[\text{row}\!\neq\!0]$ \Comment{type emb.\ (A3)}
  \State dual attention $\rightarrow$ $\pi_\theta$ over eligible
         $(O_{ij},M_k)$, $k\!\in\!\mathcal{M}_{ij}$
  \State pick pair $(O_{ij},M_k)$: greedy $\arg\max$ or sampled
  \State $s_{ij}\!\gets\!\max(\rho_i,\text{free}_k)$;\;
         $c_{ij}\!\gets\!s_{ij}\!+\!p_{ijk}$; set $k_{ij}\!\gets\!k$
  \State free$_k\!\gets\!c_{ij}$;\;
         $\rho_i\!\gets\!c_{ij}\!+\!\ell_{ij}$
         \Comment{free station, block job (A1)}
  \State advance $t$; recompute $\hat{c}_{i,j}$ and $\hat{C}_{\max}$
         via \eqref{eq:lb}
  \If{training}
    \State $r_t\!\gets\!\hat{C}_{\max}(s_t)-\hat{C}_{\max}(s_{t+1})$
           \Comment{admissible reward (A1)}
  \EndIf
\EndWhile
\State \Return $\{k_{ij},s_{ij}\}$ and $C_{\max}=\max_i c_{i,n_i}$
\end{algorithmic}
\end{algorithm}

\subsection{Adaptation 1: Lag-Aware Dynamics and Admissible Reward}
\label{sec:dynamics}
Adaptation~1 installs the job-blocking, machine-free lag semantics of
\eqref{eq:lagprec}--\eqref{eq:capacity} directly in the environment,
and pairs them with the admissible bound of \eqref{eq:lb} so that the
reward stays faithful to the true makespan. The transition changes in
one place: upon scheduling
$O_{ij}$ on station $k$ with completion $c_{ij}$, the station becomes
free at $c_{ij}$ (unchanged), while the \emph{job} becomes ready at
\begin{equation}
\rho_i \;=\; c_{ij} + \ell_{ij},
\end{equation}
implementing the semantics of
\eqref{eq:lagprec}--\eqref{eq:capacity}. The reward bound is replaced
by the lag-augmented estimate~\eqref{eq:lb}. This pairing is
essential: with lag-aware dynamics but the lag-free bound, the episode
return still telescopes to $\hat{C}_{\max}(s_0)-C_{\max}$, so the
learning objective is unchanged; what degrades is the \emph{shaping},
since each dense per-step reward is computed from an estimate that
never anticipated the lag delays, yielding higher-variance, less
informative advantages.
Lag magnitudes are normalized by the \emph{same} scale as processing
times (zero-preserving, no shift), so a 24\,h lag and a 24\,h
operation look equally long to the network, and the $\ell\equiv0$ case
remains bit-identical to the original solver.

\subsection{Adaptation 2: Anticipatory Lag Features}
Lag-aware dynamics alone let the policy \emph{experience} lags only
after committing to them. To let it \emph{anticipate} them, we add two
operation-feature channels, widening the operation features from 10 to
12:
\begin{align}
\eta^{\text{lag}}_{ij} &= \ell_{ij},\\
\eta^{\text{rem}}_{ij}(t) &= \mathds{1}[\text{$O_{ij}$ scheduled}]\cdot
 \operatorname{clip}\bigl(c_{ij}+\ell_{ij}-t,\;0,\;\ell_{ij}\bigr),
\end{align}
the lag an operation \emph{will} impose once scheduled (e.g., pouring
early so that curing overlaps other work), and the portion of an
in-flight lag still pending at the decision instant (equivalently,
how soon blocked successors become available again). Both pass through DANIEL's
per-channel normalization; because the channels normalize
independently, the original 10 channels remain numerically unchanged,
which we confirmed bit-exactly across all 111 decision states of a
held-out instance. The channels are flag-gated, giving ablation arms that differ
\emph{only} in observability.

\subsection{Adaptation 3: Type Embeddings With Liveness Masking}
Operations and stations carry semantic class information that the
pure-feature backbone cannot exploit, and injecting it naively breaks
the backbone's invariants. The PPVC classes are categorical
(\nOpTypes{} operation types: structural, MEP, finishing, assembly,
quality gates; \nStationTypes{} station types), whereas DANIEL's
pipeline $z$-normalizes every channel per step and deletes finished
nodes by zeroing, so integer codes pushed through it would be corrupted.
We therefore carry raw indices through the state and embed them
\emph{inside} the network,
$\mathbf{e}_{ij}=E_{\text{op}}(\tau_{ij})\in\mathbb{R}^8$ (and
analogously $\mathbf{e}_{k}=E_{\text{mch}}(\tau_{k})$ for stations),
concatenating after normalization, before the first attention block.
One subtlety matters: DANIEL deletes a finished operation by zeroing
its entire feature row, and its \emph{nonzero-averaging} pooling
excludes exactly the all-zero rows. A naively concatenated type
embedding $\mathbf{e}_{ij}\ne\mathbf{0}$ would make a deleted node's
row nonzero again and re-admit it into the pool. We therefore gate the
embedding by the node-liveness indicator
$\lambda_{ij}=\mathds{1}[\text{feature row of }O_{ij}\ne\mathbf{0}]$
(the same criterion the pooling uses), appending
$\lambda_{ij}\mathbf{e}_{ij}$.

\begin{proposition}[Pooling-set invariance]\label{prop:live}
With liveness gating, the augmented network's nonzero-averaging pool
selects exactly the same set of nodes as the unmodified backbone, in
every state.
\end{proposition}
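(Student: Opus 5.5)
The proof is a row-by-row case analysis on the liveness indicator. We fix a state and write $\mathbf{x}_{ij}$ for the normalized, deletion-zeroed feature row of $O_{ij}$ that the unmodified backbone produces. The augmented row is the concatenation $\tilde{\mathbf{x}}_{ij}=[\mathbf{x}_{ij};\,\lambda_{ij}\mathbf{e}_{ij}]$ with $\lambda_{ij}=\mathds{1}[\mathbf{x}_{ij}\neq\mathbf{0}]$. The same construction applies to station rows with $E_{\text{mch}}$.

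The backbone's pool selects the rows with $\mathbf{x}_{ij}\neq\mathbf{0}$. The augmented pool selects those with $\tilde{\mathbf{x}}_{ij}\neq\mathbf{0}$. So the claim reduces to the row-wise equivalence $\tilde{\mathbf{x}}_{ij}\neq\mathbf{0}\iff\mathbf{x}_{ij}\neq\mathbf{0}$, which we check in two cases.
\begin{itemize}
\item If $\mathbf{x}_{ij}\neq\mathbf{0}$, then $\tilde{\mathbf{x}}_{ij}$ contains $\mathbf{x}_{ij}$ as a sub-block and is therefore nonzero, whatever $\mathbf{e}_{ij}$ is.
\item If $\mathbf{x}_{ij}=\mathbf{0}$, then $\lambda_{ij}=0$, so $\tilde{\mathbf{x}}_{ij}=[\mathbf{0};\mathbf{0}]=\mathbf{0}$, even though $\mathbf{e}_{ij}\neq\mathbf{0}$ in general.
\end{itemize}
Since the equivalence holds for every row, the two selection sets coincide in every state.

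The main point to check is whether the pooling test is applied to the concatenated input rows, or to downstream embeddings produced by the attention blocks. If it is downstream, we would argue by induction over layers. DANIEL carries the deletion mask forward: deleted nodes stay zeroed or masked in attention, and the pool uses the mask derived from the input rows. The only new ingredient is the first-layer input, and we have just shown that its zero pattern is unchanged. So every mask and selection derived from that pattern is identical. The argument also relies on $\lambda_{ij}$ using exactly the pooling criterion, as stated in the text.

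We also note that Adaptation~2 does not break the argument. The lag channels enter $\mathbf{x}_{ij}$ before the zeroing step, so deleted rows are still identically zero there.
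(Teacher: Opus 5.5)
Your row-wise case analysis is correct. It matches the paper's observation that $\lambda_{ij}\mathbf{e}_{ij}=\mathbf{0}$ exactly when the backbone row is $\mathbf{0}$. But it only settles the input layer, and that is not where the pool looks. The paper states that nonzero-averaging reads \emph{post-attention} rows, so the case you treat as primary does not arise. In the case that does arise, you replace the needed argument with two assertions: that ``the pool uses the mask derived from the input rows,'' and that deleted nodes ``stay zeroed or masked in attention.'' These assertions are precisely what must be proved. Pool membership is a nonzero test on post-attention values. It agrees with a mask fixed before attention only if deleted rows remain exactly zero through every layer. Equal zero patterns at the input do not give this by themselves: a zero row that could attend to a live neighbor would pick up a nonzero value. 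Your ``induction over layers'' is announced, but its inductive step is missing.

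The missing step is the one the paper supplies. A deleted operation has both job-neighbor links masked (its predecessor is finished and hence deleted too). So in the local operation-block attention it attends only to its own row. That row is zero, so the attention output is zero, and this persists layer by layer (given that the layers map a zero row to zero). This is also where gating does real work. Without $\lambda_{ij}$, the deleted node would attend to its own nonzero embedding, come out nonzero, and re-enter the pool. Because the gate leaves the attention mask untouched, the zero-propagation argument carries over verbatim to the augmented network. Only then does ``pooling membership is mask-determined'' follow. The station pool is handled the same way.
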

\begin{proof}
A deleted operation has both job-neighbor links masked (it completes
after its predecessor, which is itself deleted), so in the local
operation-block attention it attends only to its own zero row and stays
exactly zero at every layer; nonzero-averaging, though it reads
post-attention rows, therefore selects exactly the live nodes, as a mask
fixed \emph{before} attention would. The liveness gate
$\lambda_{ij}\mathbf{e}_{ij}$ is $\mathbf{0}$ precisely when the backbone
row is $\mathbf{0}$ (the same zero-row criterion the pool uses, covering
even a live node that per-step $z$-normalization maps to zero) and
leaves the operation mask untouched, so this applies verbatim to the
augmented network; the station pool is gated identically. Pooling
membership is thus mask-determined, independent of post-attention
values. Empirically, across all \PoolCheckStates{} decision states of
\PoolCheckInst{} held-out M-class instances the maximum activation on a
deleted row is $0$ to machine precision with no deleted node entering
either pool; the check ships with the released code.
\end{proof}

\subsection{Backward Compatibility and Ablation Arms}
The three adaptations are designed for unambiguous attribution: each
is gated by an independent flag with inert defaults, and the inputs it
requires (\texttt{time\_lag}, type indices) are optional. With every
flag off, the code path reproduces the published DANIEL solver
bit-exactly
(mean makespan \RegressMean{} on SD2, the full regression check of
Section~\ref{sec:intro}), so any measured difference is caused by the
adaptations alone rather than by an incidental reimplementation change.
The flag matrix directly yields four ablation arms: A0
(lag-blind training with right-shift repair at evaluation, the current
industrial practice), A1 (lag-aware dynamics only), A2 ($+$features),
and A3 (full method).

\subsection{Complexity}
The three adaptations leave the backbone's asymptotics unchanged: with
$|O|=\nOpsPerModule{}n$ operations and $m$ stations, each decision step
is dominated by the $O(|O|^2 d + m^2 d + |O|m\,d)$ dual-attention cost
(embedding dimension $d$), and an episode performs $|O|$ steps. A1 adds
$O(1)$ per scheduled operation, A2 $O(|O|)$ per step, and A3 a lookup
and liveness mask, $O(|O|+m)$ per step, all dominated by the attention
terms, giving a worst-case bound cubic in the module count $n$. In
practice, attention parallelized on a graphics processing unit (GPU)
and small constants keep the
measured greedy runtime close to linear in the operation count
(\MixTimeM, \MixTimeL, and \MixTimeXL\,s at 10, 20, and 40 modules;
Section~\ref{sec:experiments}), so a re-plan stays within a real-time
control loop across the scales we test.

\section{BCA-Grounded Benchmark Generator}\label{sec:benchmark}
No public FJSP benchmark carries PPVC-realistic lags, and classical
suites (e.g.,~\cite{brandimarte1993routing}) have none at all. We
release a generator\footnote{Generator, instance seeds, CP-SAT
references, trained models, and evaluation code:
\url{https://github.com/NTUZZH/FJSP-DRL-PPVC}.}
grounded in the
official BCA PPVC guidebook~\cite{bca2017ppvc} with explicit
provenance tags (Table~\ref{tab:provenance}): \textbf{[G]} facts extracted
from the guidebook, \textbf{[E]} estimated durations (the guidebook
prescribes sequences, not durations), \textbf{[N]} industry-norm lag
ranges. Stating this split explicitly is deliberate: the generator is a
reproducible, BCA-grounded testbed for lag-aware FJSP-TL research, not a
calibrated digital twin of any specific fabricator.

\textbf{Routing classes [G].} \nRoutingClasses{} classes, namely
\{reinforced concrete (RC), steel\} shells $\times$ \{wet, dry\}
fit-outs, each aggregated to
exactly \nOpsPerModule{} station visits from the guidebook's
production sequences (mold/jig preparation through trial assembly,
including pre-pour, weld, MEP, finishing and final quality gates).
Equal route lengths across classes allow training on any module mix
in fixed-shape batches without padding; this is an instance-design
choice, not a solver limitation. Factory steel connections are welded
(bolting is the on-site method), and wet:dry module ratios default to
3:1, both per the guidebook.

\textbf{Stations [G].} \nStationTypes{} station types, from casting
through final quality gates (quality gates are modeled as stations;
curing-adjacent buffer areas are not). Factory
presets: \textsc{default} (\FactoryDefault{} stations),
\textsc{tight} (\FactoryTight), \textsc{small} (\FactorySmall);
station counts are generator parameters since layouts are
fabricator-specific.

\textbf{Durations [E] and lags [N].} Processing times are
literature-based integer-hour estimates ($\pm20\%$ across compatible
stations); lags are sampled from industry norms: curing
\CureLo--\CureHi{}\,h~\cite{chan2002production}, ponding
\PondLo--\PondHi{}\,h, and paint drying \DryLo--\DryHi{}\,h per coat.
The resulting lag-to-processing volume ratio spans
\LagRatioLo--\LagRatioHi{}, supporting the central claim that lags
rival processing in temporal volume.

\begin{table}[!t]
\caption{Benchmark Provenance: Which Elements Are Grounded in the BCA
Guidebook Versus Estimated or Parametric.}
\label{tab:provenance}
\centering
\setlength{\tabcolsep}{4pt}
\begin{tabular}{lll}
\toprule
Element & Source & Status \\
\midrule
Routing / operation sequences & BCA guidebook & Grounded [G] \\
Station taxonomy (\nStationTypes{} types) & BCA guidebook & Grounded [G] \\
Shell $\times$ fit-out classes & BCA guidebook & Grounded [G] \\
Lag categories and ranges & Industry norms & Norm-based [N] \\
Processing durations & Literature estimate & Estimated [E] \\
Factory station counts & Generator preset & Parametric \\
Buffer capacities & Not modeled & Future work \\
\bottomrule
\end{tabular}
\end{table}

\section{Experiments}\label{sec:experiments}
\subsection{Setup}
\textbf{Instance classes.} S (5 modules, \textsc{small} factory), M
(10 modules, \textsc{default}; main training class, mixed shells),
M-tight (10 modules, \textsc{tight}; capacity contention), L (20
modules), and single-shell RC/steel project classes. Train, validation
(\nVali{} instances), and test (\nTest{} instances) streams use disjoint
seed namespaces; all evaluation schedules pass an independent
feasibility validator (written against the formulation, sharing no
code with the environment).

\textbf{Baselines.} (i) CP-SAT~\cite{perron2024cpsat} on the FJSP-TL
model, \eqref{eq:lagprec}--\eqref{eq:capacity} with a lag-widened
horizon, at \CpsatCap{}\,s per instance, providing optimality
references (\CpsatOptCount/\CpsatCoverage{} proven optimal on the
M-class test set); (ii) four lag-aware priority dispatching rules
(PDRs) acting on
the same environment: first-in-first-out (FIFO), most operations
remaining (MOR), shortest processing time (SPT), and most work
remaining (MWKR); (iii) a PDR-seeded genetic-algorithm (GA)
metaheuristic on the FJSP-TL model, run at a \GaBudget{}\,s
per-instance budget, a search-based \emph{peer} probed at $30\times$ the
learned policy's runtime (CP-SAT, by contrast, serves purely as an
\emph{optimality} reference); and (iv) the four ablation
arms (A0--A3 above), each trained for \TrainUpdates{} PPO
updates (approximately \TrainWallHours{}\,h wall-clock) with the
hyperparameters of Table~\ref{tab:hyper} and \TrainEnvs{} parallel
environments.

\textbf{Hardware.} All experiments ran on a single NVIDIA RTX PRO
5000 GPU and an Intel Core Ultra 9 285K central processing unit
(CPU). Reported learned-policy and dispatching-rule times are for full
constructive rollouts on the GPU-resident environment (not a single
network forward pass); CP-SAT and the GA run on CPU, with CP-SAT using
all available threads.

\textbf{Metrics.} Mean makespan, gap to CP-SAT reference, paired
win/tie/loss and Wilcoxon signed-rank tests, and wall-clock time per
instance.

\begin{table}[!t]
\caption{Training Hyperparameters (DANIEL defaults; adaptations add
only the type-embedding dimension). MLP: multilayer perceptron;
GAE: generalized advantage estimation.}
\label{tab:hyper}
\centering
\setlength{\tabcolsep}{4pt}
\begin{tabular}{llll}
\toprule
Parameter & Value & Parameter & Value \\
\midrule
Dual-attn. layers & 2 & PPO epochs/update & 4 \\
Attention heads & 4 & Minibatch size & 1024 \\
Embed.\ dim.\ (per layer) & 32, 8 & Clip $\epsilon$ & 0.2 \\
Actor/critic MLP & 3 layers, 64 & Discount $\gamma$ & 1.0 \\
Type-embed.\ dim. & 8 & GAE $\lambda$ & 0.98 \\
Learning rate & $3\times10^{-4}$ & Entropy coef. & 0.01 \\
Parallel envs & \TrainEnvs{} & PPO updates & \TrainUpdates{} \\
\bottomrule
\end{tabular}
\end{table}

\subsection{Main Results (M Class)}
On the held-out M class (Table~\ref{tab:main}), the full method (A3,
greedy decoding) is the strongest \emph{solver-free} method and
approaches the exact solver's quality. It reaches \FullGreedyMean{}\,h
mean makespan, cutting the best dispatching rule's gap to the CP-SAT
reference from \SptGap\% (SPT) to \FullGreedyGap\% and achieving a
\WinTieLoss{} win/tie/loss against SPT over the \nTest{} instances
(Wilcoxon signed-rank test, $p=\WilcoxonP$), in
\FullGreedyTime{}\,s per instance and without an optimization model or
solver in the loop. Sampling 100 rollouts and keeping the
best (A3, sampling-100) closes the gap further to \FullSamplingGap\%
(\FullSamplingMean{}\,h, significantly below greedy, $p<10^{-15}$) at
\FullSamplingTime{}\,s per instance. It also beats a PDR-seeded genetic algorithm at a \GaBudget\,s budget
(\GaMMean{} vs.\ \FullGreedyMean\,h, $p=\DrlVsGaMP$) despite the GA using
about $30\times$ its inference time, and remains ahead even at the GA's
full \GaLongBudget\,s budget (\GaLongSubMean\,h on a \GaLongSubN-instance
subset, A3 better on \FullVsGaLongWins{}/\GaLongSubN), so the margin is
not a search-budget artifact; it widens under contention
(Section~\ref{sec:tight}).

The exact solver is itself fast on this capacity-rich class. Profiling
CP-SAT's anytime
behavior, a strong feasible plan arrives quickly: \CpsatBudAMean\,h at
\CpsatBudA\,s (already below greedy A3's \FullGreedyMean\,h), and
essentially the \CpsatMeanLag\,h reference by \CpsatBudB\,s
(\CpsatBudBMean\,h). The remaining budget mainly buys
\emph{certification}, not better plans: the proven-optimal count rises
from \CpsatBudAOpt{} to \CpsatBudBOpt, \CpsatBudCOpt, and
\CpsatOptCount{} of \nTest{} at \CpsatBudA, \CpsatBudB, \CpsatBudC, and
\CpsatCap\,s (all instances feasible throughout). On instances of this
size the exact solver is the quality ceiling, which the solver-free
policy reaches to within \FullGreedyGap\% with no optimization model,
solver, or license in the loop. Warm-starting CP-SAT with a dispatching
schedule sharpens this ceiling at small and out-of-range scales (e.g.,
\WarmXLA\,h within a second at 40 modules, below both SPT's \SptXL\,h
and the policy's \XLDrlMixGreedy\,h); at medium scale, though, the
solver-free policy holds its own in the real-time window, matching a
warm-started CP-SAT at 20 modules (\LDrlMix{} versus \WarmLB\,h at
\CpsatBudB\,s). Warm-starting, moreover, needs the solver, the
optimization model, and a heuristic scheduler, the very infrastructure
the policy is built to avoid. The policy delivers near-reference
schedules from a single constructive rollout as the strongest
solver-free method, and re-plans reactively in about a second, beating
current right-shift practice (Section~\ref{sec:reactive}), the setting a
manufacturing-execution loop actually runs in.

Validation
and test means agree (\FullValiBest{} versus \FullGreedyMean{}\,h),
indicating no overfitting to the training stream. Retraining A3 under
\NumSeeds{} independent seeds yields a test makespan of
\FullSeedMean{}\,$\pm$\,\FullSeedStd{}\,h (mean\,$\pm$\,standard
deviation across seeds; range \FullSeedRange{}\,h), confirming the
headline is not a training-seed artifact. Finally, the CP-SAT references confirm the single-instance motivation
of Table~\ref{tab:inflation} across the whole test population: over the
\nTest{} test instances, modeling lags inflates the reference makespan
by \TestLagInflation\% on average (Fig.~\ref{fig:lag-inflation}).

\begin{figure}[!t]
  \centering
  \includegraphics[width=\columnwidth]{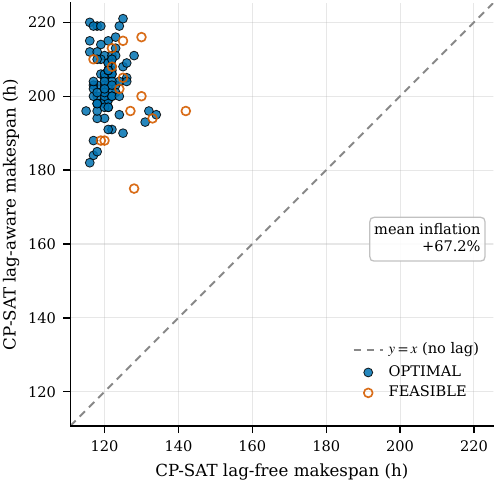}
  \caption{Time-lags dominate CP-SAT reference schedules, including the
  proven-optimal subset. Each point is one M-class test instance: CP-SAT
  makespan with lags ignored ($x$) versus the lag-aware reference ($y$);
  filled markers are proven optimal, hollow markers feasible within the
  \CpsatCap{}\,s cap. Every instance lies far above the $y=x$ line,
  with a mean per-instance inflation of \TestLagInflation\%,
  confirming that the lag structure, not machine contention alone,
  governs the achievable makespan.}
  \label{fig:lag-inflation}
\end{figure}

\begin{table}[!t]
\caption{Held-Out Test Set, M Class (\nTest{} instances,
10 modules $\times$ \nOpsPerModule{} operations, \FactoryDefault{}
stations).}
\label{tab:main}
\centering
\begin{threeparttable}
\begin{tabular}{lcccc}
\toprule
Method & Mean (h) & Std. & Gap to ref.\tnote{\dag} & s/inst \\
\midrule
SPT (best PDR) & \SptMean & \SptStd & \SptGap\% & \PdrTime \\
MWKR & \MwkrMean & \MwkrStd & \MwkrGap\% & \PdrTime \\
FIFO & \FifoMean & \FifoStd & \FifoGap\% & \PdrTime \\
MOR & \MorMean & \MorStd & \MorGap\% & \PdrTime \\
GA metaheuristic (\GaBudget\,s) & \GaMMean & \GaMStd & \GaMGap\% & \GaBudget \\
\midrule
A0 lag-blind $+$ repair & \AzeroRepairMean & \AzeroRepairStd &
  \AzeroRepairGap\% & \AzeroRepairTime \\
A1 bare backbone & \AoneGreedyMean & \AoneGreedyStd & \AoneGreedyGap\% &
  \AoneGreedyTime \\
A2 $+$ lag features & \AtwoGreedyMean & \AtwoGreedyStd & \AtwoGreedyGap\% &
  \AtwoGreedyTime \\
\textbf{A3 full (greedy)} & \textbf{\FullGreedyMean} &
  \FullGreedyStd & \textbf{\FullGreedyGap\%} & \FullGreedyTime \\
A3 full (sampling-100) & \FullSamplingMean & \FullSamplingStd &
  \FullSamplingGap\% & \FullSamplingTime \\
\midrule
CP-SAT (\CpsatCap{}\,s) & \multicolumn{4}{c}{reference
  (\CpsatOptCount/\CpsatCoverage{} proven optimal)} \\
\bottomrule
\end{tabular}
\begin{tablenotes}\footnotesize
\item[\dag] Per-instance gap to the CP-SAT lag-aware reference,
averaged over full coverage (\CpsatCoverage/\nTest;
\CpsatOptCount{} proven optimal); on the
\CpsatNonOpt{} instances not proven optimal within the
budget, this gap is a lower bound on the true optimality gap.
\item PDR inference times are rounded to a common \PdrTime\,s
(individual range 1.33--1.38\,s).
\item \textbf{Bold} marks the proposed method (A3, greedy decoding); the
sampling-100 row is a higher-budget configuration of the same model.
\end{tablenotes}
\end{threeparttable}
\end{table}

\subsection{Ablation Analysis}
Table~\ref{tab:main} and Fig.~\ref{fig:ablation} isolate the
contribution of each adaptation, and we test every contrast with a
paired Wilcoxon signed-rank test over the \nTest{} common test
instances.

\begin{figure}[!t]
  \centering
  \includegraphics[width=\columnwidth]{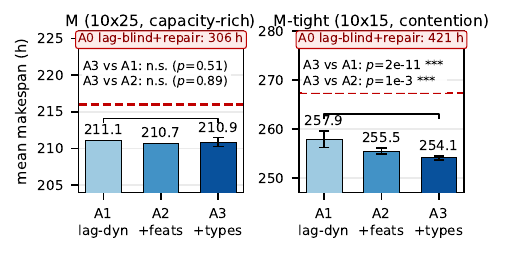}
  \caption{Ablation across capacity regimes. Lag-aware dynamics (A1)
  carry the gain; the anticipatory features (A2) and type embeddings
  (A3) are statistically neutral on the capacity-rich M class but
  separate significantly under contention (M-tight). The lag-blind arm
  A0 (plan-then-repair, current practice) lies beyond the plotted range
  (catastrophic).
  Dashed line: best dispatching rule (SPT); error bars: cross-seed
  standard deviation; brackets: paired Wilcoxon $p$.}
  \label{fig:ablation}
\end{figure}

\emph{Modeling lags at decision time (A0 $\to$ A1) is decisive.} The
lag-blind arm A0 reproduces current industrial practice: it plans
without lags (reaching a deceptively low plan makespan of
\AzeroBlindMean{}\,h) and then right-shifts the plan to feasibility
once the true lags are revealed. Repair inflates that plan by
\AzeroPlanInfl\% to \AzeroRepairMean{}\,h, leaving A0 not only
\AzeroRepairGap\% above the CP-SAT reference but \emph{worse than every
lag-aware priority rule}, including FIFO (\FifoMean{}\,h). Simply
making the same backbone lag-aware (A1) cuts the makespan to
\AoneGreedyMean{}\,h, a \AzeroRepairGap\,$\to$\,\AoneGreedyGap\% gap
reduction that is highly significant ($p=\AblFullVsAzeroP$).
Lag-awareness \emph{at decision time}, not post-hoc repair, drives the
gain.

\emph{On the capacity-rich M class the anticipatory features and type
embeddings are statistically neutral} (Fig.~\ref{fig:ablation}): A2 and
A3 are indistinguishable from the bare lag-aware backbone A1
($p=\AblFullVsAoneP$ and $p=\AblFullVsAtwoP$). When capacity is generous
the transition dynamics alone already let the policy place curing-heavy
operations early, so the extra observability has little left to exploit;
its value surfaces where capacity binds, as Section~\ref{sec:tight}
confirms (each adaptation significant, $p\le0.022$). We therefore deploy
A3: tied with A2 on the M class, but the arm that improves significantly
once capacity binds, hence the safer choice across regimes at no
asymptotic cost.

\subsection{Capacity-Tight Contention}\label{sec:tight}
Halving the factory to \FactoryTight{} stations (M-tight), a plausible
high-load configuration for a busy plant running many modules over few
lines rather than an artificial stress test, sharply raises machine
contention. Here the learned policy's advantage over
dispatching \emph{widens}: the full method reaches
\MtightFullMean{}\,h, narrowing the best rule's gap to the CP-SAT
reference from \MtightSptGap\% (SPT) to \MtightFullGap\% and achieving
a \MtightWinTieLoss{} win/tie/loss against SPT over the \nTest{}
instances ($p=\MtightWilcoxonP$,
Table~\ref{tab:mtight}), a wider margin than the
\SptGap\%\,$\to$\,\FullGreedyGap\% improvement on the capacity-rich M
class. Two effects compound. First,
tighter capacity makes instances harder for the exact solver (only
\MtightOpt{} of \nTest{} proven optimal within the budget, so the
M-tight gaps are largely lower bounds on the true optimality gap).
Second, the
mean lag inflation falls to \MtightInfl\% as machine contention, not
lags alone, becomes co-dominant; this is precisely the regime where
look-ahead over operation-machine assignment pays off. The genetic-algorithm
metaheuristic, given \GaBudget\,s, reaches only \GaMtightMean\,h here
(\GaMtightGap\% gap); greedy A3 beats it by over $10$\,h
($p=\DrlVsGaMtightP$), far wider than on the M class. Crucially, the
per-arm ablation that was neutral on the M class
\emph{separates under contention}, and this is not a seed artifact
(Fig.~\ref{fig:ablation}): across \MtightSeeds{} independent seeds the
A1$\to$A2$\to$A3 ordering holds in \emph{every} seed, with cross-seed
standard deviation ($\le\MtightAoneSeedStd$\,h) well below the inter-arm
gaps, the full stack significantly beating both the bare backbone
(pooled $p=\MtightFullVsAonePooledP$) and the features-only arm (pooled
$p=\MtightFullVsAtwoPooledP$). The anticipatory features and type
embeddings, statistically inert under generous capacity ($p>0.5$ on the
M class), thus earn their place exactly where the shop is hard.
Lag-blindness is conversely catastrophic here: right-shifting the
lag-blind plan to feasibility reaches \MtightAzeroRepairMean{}\,h
(\MtightAzeroRepairGap\% above the reference), worse than every
dispatching rule.

\begin{table}[!t]
\caption{Held-Out Test Set, M-tight Class (\nTest{} instances,
10 modules, \FactoryTight{} stations).}
\label{tab:mtight}
\centering
\begin{threeparttable}
\begin{tabular}{lcccc}
\toprule
Method & Mean (h) & Std. & Gap to ref.\tnote{\dag} & W/T/L\tnote{\ddag} \\
\midrule
A0 lag-blind $+$ repair & \MtightAzeroRepairMean & \MtightAzeroRepairStd &
  \MtightAzeroRepairGap\% & 0/0/100 \\
A1 bare backbone & \MtightAoneMean & \MtightAoneStd & \MtightAoneGap\% &
  82/7/11 \\
A2 $+$ lag features & \MtightAtwoMean & \MtightAtwoStd & \MtightAtwoGap\% &
  90/1/9 \\
\textbf{A3 full (greedy)} & \textbf{\MtightFullMean} & \MtightFullStd &
  \textbf{\MtightFullGap\%} & \MtightWinTieLoss \\
\midrule
SPT (best PDR) & \MtightSptMean & \MtightSptStd & \MtightSptGap\% & $-$ \\
MWKR & \MtightMwkrMean & \MtightMwkrStd & \MtightMwkrGap\% & 6/0/94 \\
FIFO & \MtightFifoMean & \MtightFifoStd & \MtightFifoGap\% & 0/0/100 \\
MOR & \MtightMorMean & \MtightMorStd & \MtightMorGap\% & 0/0/100 \\
GA metaheuristic (\GaBudget\,s) & \GaMtightMean & \GaMtightStd &
  \GaMtightGap\% & 66/7/27 \\
\midrule
CP-SAT (\CpsatCap{}\,s) & \multicolumn{4}{c}{reference
  (\MtightOpt/\nTest{} proven optimal)} \\
\bottomrule
\end{tabular}
\begin{tablenotes}\footnotesize
\item[\dag] Per-instance gap to the CP-SAT lag-aware reference (full
\nTest/\nTest{} coverage; \MtightOpt{} proven optimal, the rest lower
bounds).
\item[\ddag] Win/tie/loss versus SPT over the \nTest{} instances.
\item \textbf{Bold} marks the proposed method (A3, greedy decoding).
\item Ablation arms (A0--A3) are shown for a representative seed; their
cross-seed means$\pm$std over \MtightSeeds{} seeds are reported in the
text and preserve the A1--A3 ordering.
\end{tablenotes}
\end{threeparttable}
\end{table}

\subsection{Zero-Shot Generalization}
We apply the \emph{M-trained} A3 policy, without any retraining, to
four unseen classes (Table~\ref{tab:gen}); transfer holds across shell
systems but degrades across module count. Across \emph{shell systems}
at the trained factory size, namely single-shell RC and steel
projects, the policy transfers cleanly, beating the best
dispatching rule on both (\GenRcfullMean{} vs.\ \GenRcsptMean{}\,h,
$p=\GenRcP$ on RC; \GenSteelfullMean{} vs.\ \GenSteelsptMean{}\,h,
$p=\GenSteelP$ on steel). Because this axis varies the operation- and
station-\emph{types} the embeddings encode, it is the more pertinent
test of the type-embedding adaptation, and the learned policy clearly
helps. Across \emph{instance sizes}, a policy trained at a single module
count need not dominate at another, a known limitation of fixed-shape
DRL training that size-mixed training removes
(Fig.~\ref{fig:crosssize}). The single-size policy is overtaken by the
scale-free SPT rule on the small (5-module) and large (20-module)
classes (win/tie/loss \GenSwtl{} and \GenLwtl{} versus SPT); one policy
trained on a \{10,20,30\}-module mix instead beats SPT at 10 and 20
modules (\MDrlMix{} and \LDrlMix{}\,h versus \SptM{} and \SptL{}\,h,
over 100 and 50 held-out instances; sampling widens this to
\LDrlMixSamp\,h at 20) and matches it to within $0.4\%$ at the largest
trained size of 30 (\MidDrlMix{} versus \SptMid{}\,h over 20 instances,
with no certified reference at this size), at an inference cost linear
in the operation count (\MixTimeM--\MixTimeXL\,s from 10 to 40 modules);
beyond the trained range, at 40 modules, the real-time greedy policy
trails SPT while higher-budget sampling brings it nearly level
(\XLDrlMixSampling{} versus SPT's \SptXL\,h), marking the current
\emph{exploration boundary} of the learned policy and a natural opening
for neural-metaheuristic hybrids. Size
specialization is a deployment choice rather than a constraint in PPVC,
where a factory's layout and throughput are fixed over multi-year
cycles, so a capacity-specific policy is the natural deployment unit and
the binding transfer axis is the \emph{shell system}, where the policy
generalizes
out of the box.

\begin{figure}[!t]
  \centering
  \includegraphics[width=0.88\columnwidth]{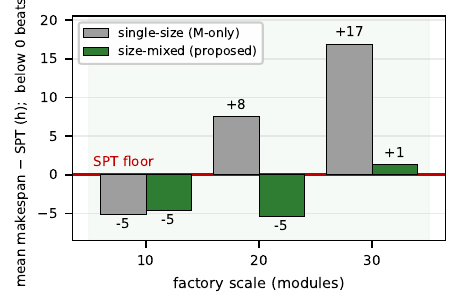}
  \caption{Size-mixed training closes the DRL size-generalization gap
  within the trained range: mean makespan relative to the SPT floor
  (below zero beats SPT; held-out $100/50/20$ instances at $10/20/30$
  modules). The single-size policy falls behind SPT at 20 and 30
  modules, whereas one policy trained on a \{10,20,30\}-module mix beats
  SPT at 10 and 20 and matches it to within $0.4\%$ at 30. Beyond the
  trained range (40 modules) both trail SPT. Shaded band: trained
  range.}
  \label{fig:crosssize}
\end{figure}

\begin{table}[!t]
\caption{Zero-Shot Shell-System Transfer of the M-Trained A3 Policy
(mean makespan, h; lower is better; 50 instances per class).}
\label{tab:gen}
\centering
\begin{threeparttable}
\begin{tabular}{lccc}
\toprule
Shell system & A3 (ours) & SPT & W/T/L\tnote{\dag} \\
\midrule
RC project & \textbf{\GenRcfullMean} & \GenRcsptMean & \GenRcwtl \\
Steel project & \textbf{\GenSteelfullMean} & \GenSteelsptMean &
  \GenSteelwtl \\
\bottomrule
\end{tabular}
\begin{tablenotes}\footnotesize
\item[\dag] A3 win/tie/loss versus SPT (bold = lower mean makespan): the
policy transfers cleanly across shell systems at the trained size.
The size axis is shown in Fig.~\ref{fig:crosssize}.
\end{tablenotes}
\end{threeparttable}
\end{table}

\subsection{Reactive Rescheduling Under Disruptions}\label{sec:reactive}
The premise that lag-awareness must be \emph{fast}
(Section~\ref{sec:intro}) is testable only under disruption. We execute
the A3 plan to \RDisOffset\% completion, inject a mid-execution
disruption (a \emph{lag perturbation}, a curing/drying time realizing at
\RLagFactor$\times$ plan; or a \emph{machine breakdown}), and re-solve
the residual with three methods: \textbf{DRL-reactive} (the policy
continues from the disrupted state, free to re-route);
\textbf{right-shift repair} (assignments frozen, start times pushed to
feasibility: current practice); and a \textbf{prefix-respecting residual
CP-SAT} that pins the committed prefix and re-optimizes the rest. Because
a makespan-only re-optimization ignores churn, we also give CP-SAT the
\emph{fair} objective $C_{\max}+w\cdot\mathrm{nervousness}$ and sweep $w$
to trace its stability-makespan frontier, the strongest fair
reactive baseline (Table~\ref{tab:reactive}, \nTest{}
M-class instances; all schedules feasibility-validated, the prefix and
outage honored exactly).

\textbf{Versus current practice the learned policy wins clearly}: it
cuts the post-disruption makespan by \RDeltaLP\% (lag) and \RDeltaBD\%
(breakdown) over right-shift by actively re-routing rather than only
delaying, solver-free. Where an exact solver can be hosted, a
stability-aware CP-SAT is the absolute quality and stability ceiling: at
$w=\StabW$ it changes only \StabLPNerv{} and \StabBDNerv{} operations
(lag and breakdown) versus DRL-reactive's \RLPdrlNerv{} and
\RBDdrlNerv{}, at comparable makespan and under \StabLPTime\,s. This
maps a ceiling, not a drop-in alternative: reaching it carries the full
constraint-programming model and solver binaries in the real-time loop,
re-tuned on every disruption. The learned policy is the complement: a
static compute graph evaluated in a few matrix multiplications,
re-planning in about a second on edge hardware with no solver, model, or
penalty in the loop, where it is the strongest solver-free option and
the only one that improves on right-shift by re-routing. Its zero-shot
churn grows with cell size, so at large scales it pairs with exact
re-optimization where a solver can be hosted.

\begin{table}[!t]
\caption{Reactive Rescheduling After a Mid-Execution Disruption
(\nTest{} M-class instances; disruption at \RDisOffset\% completion).}
\label{tab:reactive}
\centering
\setlength{\tabcolsep}{4pt}
\begin{threeparttable}
\begin{tabular}{llcccc}
\toprule
Disruption & Method & Mk (h) & Infl.\% & Time (s) & Nerv.\tnote{\dag} \\
\midrule
\multirow{4}{*}{\shortstack[l]{Lag\\pert.}}
 & \textbf{DRL-reactive} & \RLPdrlMs & \RLPdrlInf & \RLPdrlT &
   \RLPdrlNerv \\
 & Right-shift & \RLPrsMs & \RLPrsInf & \RLPrsT & \RLPrsNerv \\
 & Resid.\ CP-SAT (mk.)\tnote{\ddag} & \ResidLPMs & \ResidLPInfl & \ResidLPTime &
   \ResidLPNerv \\
 & Stab.-aware CP-SAT\tnote{\S} & \StabLPMs & \StabLPInfl & \StabLPTime &
   \StabLPNerv \\
\midrule
\multirow{4}{*}{\shortstack[l]{Break-\\down}}
 & \textbf{DRL-reactive} & \RBDdrlMs & \RBDdrlInf & \RBDdrlT &
   \RBDdrlNerv \\
 & Right-shift & \RBDrsMs & \RBDrsInf & \RBDrsT & \RBDrsNerv \\
 & Resid.\ CP-SAT (mk.)\tnote{\ddag} & \ResidBDMs & \ResidBDInfl & \ResidBDTime &
   \ResidBDNerv \\
 & Stab.-aware CP-SAT\tnote{\S} & \StabBDMs & \StabBDInfl & \StabBDTime &
   \StabBDNerv \\
\bottomrule
\end{tabular}
\begin{tablenotes}\footnotesize
\item[\dag] Schedule nervousness: remaining operations forced to change
(machine reassignment, or start-time shift $>1$\,h) versus the original
plan, i.e., the downstream churn one disruption propagates. The $1$\,h
threshold matches the benchmark's integer-hour resolution.
\item[\ddag] Makespan-only prefix-respecting residual CP-SAT (committed
prefix fixed, breakdown encoded as a machine-unavailability interval):
near-optimal in makespan but indifferent to churn, so it re-routes most
of the plan.
\item[\S] Stability-aware residual CP-SAT, objective
$C_{\max}+w\cdot$nervousness at $w=\StabW$: the strongest \emph{fair}
reactive baseline. It dominates DRL-reactive on both axes (far lower
nervousness at comparable makespan), in under \StabLPTime\,s with proven
optimality on all \nTest{} instances.
\item \textbf{Bold} marks the proposed learned method: the strongest
\emph{solver-free} option, beating current right-shift practice.
\end{tablenotes}
\end{threeparttable}
\end{table}

\subsection{Threats to Validity}
\emph{Benchmark realism.} Guidebook provenance covers sequences,
station taxonomy, and lag ranges; \emph{durations} are estimates ([E])
pending industry calibration, so conclusions rest on relative
comparisons under identical instances, not absolute hours. We verify
this robustness directly: when A3 and SPT are re-evaluated under
multiplicative duration perturbations $U[1-\delta,1+\delta]$, the
A3$<$SPT ranking holds at every $\delta$ up to $\pm\SensMaxDeltaPct\%$ (A3 winning
\SensWinLo--\SensWinHi\% of instances, $p\le\SensWorstP$), so the
relative conclusions do not hinge on the specific duration estimates. Lags are
modeled as machine-independent constants per operation; equal-length
routes are a generator design choice enabling fixed-shape batching,
and factory layouts are parametric presets rather than a specific
fabricator's floor (a reproducible testbed, not a calibrated digital
twin, as Section~\ref{sec:benchmark} states).

\emph{Modeling and evaluation.} The formulation treats the off-station
lag buffers (curing, ponding, drying) as nonbinding. As an a posteriori
check, across the \BufNumInstances{} M-class instances the policy never
holds more than \BufOursPeakMax{} modules in lag buffers at once (mean
\BufOursPeakMean{}), the same as SPT; at 10 modules this peak equals the
batch, so the off-station lag buffers are nonbinding at the deployed
scale. Modeling finite buffers
as explicit resources (an FJSP-TL with resource-constrained buffers),
and the floor resources we omit (labor crews, material staging,
crane and transport interfaces, and stochastic lag durations) are left
to future work. For the \CpsatNonOpt{} instances whose CP-SAT
reference is not proven optimal, the reported gaps are lower bounds on
the true optimality gap. Seed robustness holds on both axes (Sections~\ref{sec:experiments}
and~\ref{sec:tight}): the headline result replicates across
\NumSeeds{} training seeds, the contention ordering holds in every
seed, and seed variance affects no reported conclusion.

\section{Conclusion}
We formalized PPVC module-factory scheduling as FJSP with
job-blocking, machine-free time-lags, and showed empirically that these
lags dominate both optimal and heuristic makespans. To solve it, we
adapted a state-of-the-art DRL solver through three individually
ablatable extensions, two of which come with correctness guarantees:
lag-aware dynamics with an \emph{admissible} makespan bound under which
the learning objective stays equivalent to the true makespan, and
liveness-masked type embeddings that provably preserve the backbone's
pooling set. With all three
extensions disabled the implementation reproduces the original solver
bit-exactly, so every reported gain is attributable to the
adaptations.
On a BCA-grounded benchmark the full method outperforms both the
strongest dispatching rule and a genetic-algorithm metaheuristic given
far more compute, and reaches the CP-SAT references
(\CpsatOptCount/\CpsatCoverage{} proven optimal) to within
\FullGreedyGap\% with a solver-free policy. This margin widens under capacity contention and holds
across training seeds and duration perturbations, indicating practical
potential pending calibration on factory data. Where an exact solver and optimization expertise are available
they remain the stronger choice, including for stable reactive
re-optimization; the learned policy's niche is solver-free deployment.
The learned policy transfers cleanly across shell systems, and a
single size-mixed policy beats the best dispatching rule from 10 to 20
modules; larger sizes remain the frontier. Future work includes variable-length routes,
maximal lags, stochastic and station-dependent lag durations, finite
curing buffers, multi-factory coupling, and duration calibration from
factory data.

\bibliographystyle{IEEEtran}
\bibliography{refs}

@article{wang2024daniel,
  author  = {Wang, Runqing and Wang, Gang and Sun, Jian and Deng, Fang and Chen, Jie},
  title   = {Flexible Job Shop Scheduling via Dual Attention Network-Based Reinforcement Learning},
  journal = {IEEE Trans. Neural Netw. Learn. Syst.},
  year    = {2024},
  volume  = {35},
  number  = {3},
  pages   = {3091--3102},
  doi     = {10.1109/TNNLS.2023.3306421}
}

@article{song2023fjsp,
  author  = {Song, Wen and Chen, Xinyang and Li, Qiqiang and Cao, Zhiguang},
  title   = {Flexible Job-Shop Scheduling via Graph Neural Network and Deep Reinforcement Learning},
  journal = {IEEE Trans. Ind. Informat.},
  year    = {2023},
  volume  = {19},
  number  = {2},
  pages   = {1600--1610},
  doi     = {10.1109/TII.2022.3189725}
}

@inproceedings{zhang2020l2d,
  author    = {Zhang, Cong and Song, Wen and Cao, Zhiguang and Zhang, Jie and Tan, Puay Siew and Xu, Chi},
  title     = {Learning to Dispatch for Job Shop Scheduling via Deep Reinforcement Learning},
  booktitle = {Adv. Neural Inf. Process. Syst. ({NeurIPS})},
  volume    = {33},
  year      = {2020},
  pages     = {1621--1632}
}

@article{brandimarte1993routing,
  author  = {Brandimarte, Paolo},
  title   = {Routing and Scheduling in a Flexible Job Shop by Tabu Search},
  journal = {Ann. Oper. Res.},
  year    = {1993},
  volume  = {41},
  number  = {3},
  pages   = {157--183},
  doi     = {10.1007/BF02023073}
}

@misc{schulman2017ppo,
  author = {Schulman, John and Wolski, Filip and Dhariwal, Prafulla and Radford, Alec and Klimov, Oleg},
  title  = {Proximal Policy Optimization Algorithms},
  year   = {2017},
  note   = {arXiv:1707.06347}
}

@inproceedings{vaswani2017attention,
  author    = {Vaswani, Ashish and Shazeer, Noam and Parmar, Niki and Uszkoreit, Jakob and Jones, Llion and Gomez, Aidan N. and Kaiser, {\L}ukasz and Polosukhin, Illia},
  title     = {Attention is All You Need},
  booktitle = {Adv. Neural Inf. Process. Syst. ({NeurIPS})},
  volume    = {30},
  year      = {2017},
  pages     = {5998--6008}
}

@misc{perron2024cpsat,
  author = {Perron, Laurent and Didier, Fr{\'e}d{\'e}ric},
  title  = {{CP-SAT} Solver, {Google} {OR-Tools} v9.14},
  url    = {https://developers.google.com/optimization/cp/cp_solver/},
  year   = {2025},
  note   = {{Accessed: Jun. 13, 2026}}
}

@article{wikum1994onemachine,
  author  = {Wikum, Erick D. and Llewellyn, Donna C. and Nemhauser, George L.},
  title   = {One-Machine Generalized Precedence Constrained Scheduling Problems},
  journal = {Oper. Res. Lett.},
  year    = {1994},
  volume  = {16},
  number  = {2},
  pages   = {87--99},
  doi     = {10.1016/0167-6377(94)90064-7}
}

@article{brucker1999branch,
  author  = {Brucker, Peter and Hilbig, Thomas and Hurink, Johann},
  title   = {A Branch and Bound Algorithm for a Single-Machine Scheduling Problem with Positive and Negative Time-Lags},
  journal = {Discrete Appl. Math.},
  year    = {1999},
  volume  = {94},
  number  = {1--3},
  pages   = {77--99},
  doi     = {10.1016/S0166-218X(99)00015-3}
}

@article{bartusch1988scheduling,
  author  = {Bartusch, Martin and M{\"o}hring, Rolf H. and Radermacher, Franz J.},
  title   = {Scheduling Project Networks with Resource Constraints and Time Windows},
  journal = {Ann. Oper. Res.},
  year    = {1988},
  volume  = {16},
  number  = {1},
  pages   = {201--240},
  doi     = {10.1007/BF02283745}
}

@article{caumond2008memetic,
  author  = {Caumond, Anthony and Lacomme, Philippe and Tchernev, Nikolay},
  title   = {A Memetic Algorithm for the Job-Shop with Time-Lags},
  journal = {Comput. Oper. Res.},
  year    = {2008},
  volume  = {35},
  number  = {7},
  pages   = {2331--2356},
  doi     = {10.1016/j.cor.2006.11.007}
}

@article{chan2002production,
  author  = {Chan, Wah Tat and Hu, Hao},
  title   = {Production Scheduling for Precast Plants Using a Flow Shop Sequencing Model},
  journal = {J. Comput. Civ. Eng.},
  year    = {2002},
  volume  = {16},
  number  = {3},
  pages   = {165--174},
  doi     = {10.1061/(ASCE)0887-3801(2002)16:3(165)}
}

@article{hammad2020novel,
  author  = {Hammad, Ahmed W. A. and Grzybowska, Hanna and Sutrisna, Monty and Akbarnezhad, Ali and Haddad, Assed},
  title   = {A Novel Mathematical Optimisation Model for the Scheduling of Activities in Modular Construction Factories},
  journal = {Constr. Manage. Econ.},
  year    = {2020},
  volume  = {38},
  number  = {6},
  pages   = {534--551},
  doi     = {10.1080/01446193.2019.1682174}
}

@article{wang2023production,
  author  = {Wang, Shuqiang and Zhang, Xi},
  title   = {Production Scheduling of Prefabricated Components Considering Delivery Methods},
  journal = {Sci. Rep.},
  year    = {2023},
  volume  = {13},
  note    = {{Art. no. 15094}},
  doi     = {10.1038/s41598-023-42374-w}
}

@article{peiris2023production,
  author  = {Peiris, Achini and Hui, Felix Kin Peng and Duffield, Colin and Ngo, Tuan},
  title   = {Production Scheduling in Modular Construction: Metaheuristics and Future Directions},
  journal = {Autom. Constr.},
  year    = {2023},
  volume  = {150},
  note    = {{Art. no. 104851}},
  doi     = {10.1016/j.autcon.2023.104851}
}

@article{smit2025graph,
  author  = {Smit, Igor G. and Zhou, Jianan and Reijnen, Robbert and Wu, Yaoxin and Chen, Jian and Zhang, Cong and Bukhsh, Zaharah and Zhang, Yingqian and Nuijten, Wim},
  title   = {Graph Neural Networks for Job Shop Scheduling Problems: A Survey},
  journal = {Comput. Oper. Res.},
  year    = {2025},
  volume  = {176},
  note    = {{Art. no. 106914}},
  doi     = {10.1016/j.cor.2024.106914}
}

@article{lei2022multiaction,
  author  = {Lei, Kun and Guo, Peng and Zhao, Wenchao and Wang, Yi and Qian, Linmao and Meng, Xiangyin and Tang, Liansheng},
  title   = {A Multi-Action Deep Reinforcement Learning Framework for Flexible Job-Shop Scheduling Problem},
  journal = {Expert Syst. Appl.},
  year    = {2022},
  volume  = {205},
  note    = {{Art. no. 117796}},
  doi     = {10.1016/j.eswa.2022.117796}
}

@article{lei2024largescale,
  author  = {Lei, Kun and Guo, Peng and Wang, Yi and Zhang, Jian and Meng, Xiangyin and Qian, Linmao},
  title   = {Large-Scale Dynamic Scheduling for Flexible Job-Shop with Random Arrivals of New Jobs by Hierarchical Reinforcement Learning},
  journal = {IEEE Trans. Ind. Informat.},
  year    = {2024},
  volume  = {20},
  number  = {1},
  pages   = {1007--1018},
  doi     = {10.1109/TII.2023.3272661}
}

@article{xu2024transformer,
  author  = {Xu, Shaojun and Li, Yufan and Li, Qiqiang},
  title   = {A Deep Reinforcement Learning Method Based on a Transformer Model for the Flexible Job Shop Scheduling Problem},
  journal = {Electronics},
  year    = {2024},
  volume  = {13},
  number  = {18},
  note    = {{Art. no. 3696}},
  doi     = {10.3390/electronics13183696}
}

@article{garey1976complexity,
  author  = {Garey, Michael R. and Johnson, David S. and Sethi, Ravi},
  title   = {The Complexity of Flowshop and Jobshop Scheduling},
  journal = {Math. Oper. Res.},
  year    = {1976},
  volume  = {1},
  number  = {2},
  pages   = {117--129},
  doi     = {10.1287/moor.1.2.117}
}

@misc{bca2017ppvc,
  author = {{Building and Construction Authority}},
  title  = {Design for Manufacturing and Assembly ({DfMA}): Prefabricated
            Prefinished Volumetric Construction ({PPVC}) Guidebook},
  year   = {2017},
  url    = {https://www1.bca.gov.sg/growth-and-transformation/productivity/design-for-manufacturing-and-assembly-dfma/prefabricated-prefinished-volumetric-construction-ppvc/},
  note   = {{Singapore. Accessed: Jun. 13, 2026}}
}

@article{kim2022rlprecast,
  author  = {Kim, Taehoon and Kim, Yong-Woo and Lee, Dongmin and Kim, Minju},
  title   = {Reinforcement Learning Approach to Scheduling of Precast Concrete Production},
  journal = {J. Clean. Prod.},
  year    = {2022},
  volume  = {336},
  note    = {{Art. no. 130419}},
  doi     = {10.1016/j.jclepro.2022.130419}
}

@article{du2024distributed,
  author  = {Du, Yu and Li, Jun-Qing},
  title   = {A Deep Reinforcement Learning Based Algorithm for a Distributed Precast Concrete Production Scheduling},
  journal = {Int. J. Prod. Econ.},
  year    = {2024},
  volume  = {268},
  note    = {{Art. no. 109102}},
  doi     = {10.1016/j.ijpe.2023.109102}
}

@article{zu2025rlprecast,
  author  = {Zu, Leting and Liao, Wenzhu},
  title   = {Reinforcement Learning--Based Multiobjective and Multiconstraint Production Scheduling for Precast Concrete},
  journal = {J. Constr. Eng. Manage.},
  year    = {2025},
  volume  = {151},
  number  = {8},
  note    = {{Art. no. 04025089}},
  doi     = {10.1061/JCEMD4.COENG-15995}
}

@article{wang2024mictransport,
  author  = {Wang, Huiwen and Liao, Liang and Yi, Wen and Zhen, Lu},
  title   = {Transportation Scheduling for Modules Used in Modular Integrated Construction},
  journal = {Int. J. Prod. Res.},
  year    = {2024},
  volume  = {62},
  number  = {11},
  pages   = {3918--3931},
  doi     = {10.1080/00207543.2023.2251602}
}

@article{elmenshawy2025pipespool,
  author  = {ElMenshawy, Mohamed and Wu, Lingzi and Gue, Brian and AbouRizk, Simaan},
  title   = {Automating Pipe Spool Fabrication Shop Scheduling for Modularized Industrial Construction Projects Using Reinforcement Learning},
  journal = {J. Comput. Civ. Eng.},
  year    = {2025},
  volume  = {39},
  number  = {3},
  note    = {{Art. no. 04025013}},
  doi     = {10.1061/JCCEE5.CPENG-6042}
}

@article{liu2024dynamicgat,
  author  = {Liu, Chien-Liang and Tseng, Chun-Jan and Weng, Po-Hao},
  title   = {Dynamic Job-Shop Scheduling via Graph Attention Networks and Deep Reinforcement Learning},
  journal = {IEEE Trans. Ind. Informat.},
  year    = {2024},
  volume  = {20},
  number  = {6},
  pages   = {8662--8672},
  doi     = {10.1109/TII.2024.3371489}
}

@article{luo2022realtime,
  author  = {Luo, Shu and Zhang, Linxuan and Fan, Yushun},
  title   = {Real-Time Scheduling for Dynamic Partial-No-Wait Multiobjective Flexible Job Shop by Deep Reinforcement Learning},
  journal = {IEEE Trans. Autom. Sci. Eng.},
  year    = {2022},
  volume  = {19},
  number  = {4},
  pages   = {3020--3038},
  doi     = {10.1109/TASE.2021.3104716}
}

@article{huang2024endtoend,
  author  = {Huang, Jiang-Ping and Gao, Liang and Li, Xin-Yu},
  title   = {An End-to-End Deep Reinforcement Learning Method Based on Graph Neural Network for Distributed Job-Shop Scheduling Problem},
  journal = {Expert Syst. Appl.},
  year    = {2024},
  volume  = {238},
  note    = {{Art. no. 121756}},
  doi     = {10.1016/j.eswa.2023.121756}
}

@article{fondrevelle2006flowshop,
  author  = {Fondrevelle, Julien and Oulamara, Ammar and Portmann, Marie-Claude},
  title   = {Permutation Flowshop Scheduling Problems with Maximal and Minimal Time Lags},
  journal = {Comput. Oper. Res.},
  year    = {2006},
  volume  = {33},
  number  = {6},
  pages   = {1540--1556},
  doi     = {10.1016/j.cor.2004.11.006}
}

@article{dhouib2018nonpermutation,
  author  = {Dhouib, Emna and Teghem, Jacques and Loukil, Taicir},
  title   = {Non-Permutation Flowshop Scheduling Problem with Minimal and Maximal Time Lags: Theoretical Study and Heuristic},
  journal = {Ann. Oper. Res.},
  year    = {2018},
  volume  = {267},
  number  = {1--2},
  pages   = {101--134},
  doi     = {10.1007/s10479-018-2775-5}
}

\end{document}